\newtheorem{theorem}{Theorem}
\newtheorem{proposition}{Proposition}
\newtheorem{lemma}{Lemma}
\newtheorem{remark}{Remark}
\newcommand{\RomanNumeralCaps}[1] 
    {\MakeUppercase{\romannumeral #1}} 
\begin{document}

\title{On the Search for Feedback in Reinforcement Learning}

\author{Ran Wang, Karthikeya S. Parunandi, Aayushman Sharma, Raman Goyal, Suman Chakravorty
\thanks{The authors are with the Department of Aerospace Engineering, Texas A\&M University, College Station, TX 77843 USA. \{ rwang0417, s.parunandi, aayushmansharma, ramaniitrgoyal92, schakrav\} @tamu.edu}\\
}

\markboth{On the Search for Feedback in Reinforcement Learnings}
{Ran Wang, Karthikeya S. Parunandi, Aayushman Sharma, Raman Goyal, Suman Chakravorty}

\maketitle

\begin{abstract}
The problem of Reinforcement Learning (RL) in an unknown nonlinear dynamical system is equivalent to the search for an optimal feedback law utilizing the simulations/ rollouts of the dynamical system. Most RL techniques search over a complex global nonlinear feedback parametrization making them suffer from high training times as well as variance. Instead, we advocate searching over a local feedback representation consisting of an open-loop sequence, and an associated optimal linear feedback law completely determined by the open-loop. We show that this alternate approach results in highly efficient training, the answers obtained are repeatable and hence reliable, and the resulting closed performance is superior to global state-of-the-art RL techniques. Finally, if we replan, whenever required, which is feasible due to the fast and reliable local solution, it allows us to recover global optimality of the resulting feedback law.
\end{abstract}
\begin{IEEEkeywords}
RL, Optimal control, Nonlinear systems, Feedback control
\end{IEEEkeywords}
\IEEEpeerreviewmaketitle
\section{Introduction}
\label{introduction}
\IEEEPARstart{T}{he} control of an unknown (stochastic) dynamical system has a rich history in the control system literature \cite{kumar2015stochastic,ioannou2012robust}. 
The stochastic adaptive control literature mostly addresses  Linear Time Invariant (LTI) problems. The optimal control of an unknown nonlinear dynamical system with continuous state space and continuous action space is a significantly more challenging problem. The `curse of dimensionality' associated with Dynamic Programming (DP)  makes solving such problems computationally intractable, in general. 

The last several years have seen significant progress in deep neural networks based reinforcement learning approaches for controlling unknown dynamical systems, with applications in many areas like playing games \cite{silver2016mastering}, locomotion \cite{lillicrap2015continuous} and robotic hand manipulation \cite{levine2016end}. A number of new algorithms that show promising performance have been proposed \cite{acktr,trpo,ppo} and various improvements and innovations have been continuously developed. However, despite excellent performance on many tasks, reinforcement learning (RL) is still considered very data intensive. The training time for such algorithms is typically really large. Moreover, the techniques suffer from high variance and reproducibility issues \cite{henderson2018deep}. While there have been some attempts to improve the efficiency \cite{gu2016q}, a systematic approach is still lacking. \textit{The issues with RL can be attributed to the typically complex parametrization of the global feedback policy, and the related fundamental question of what this feedback parametrization ought to be?}

\begin{figure}[!ht]
\begin{multicols}{2}
      \subfloat[6-link Swimmer]{\includegraphics[width=0.996\linewidth]{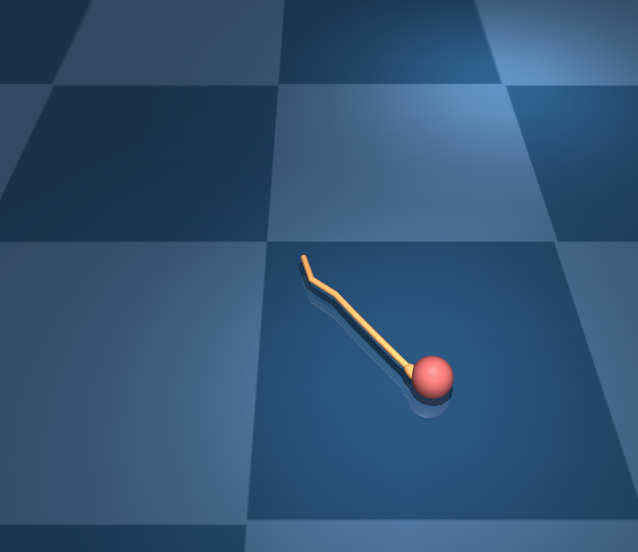}}  
      \subfloat[Fish ]{\includegraphics[width=1\linewidth]{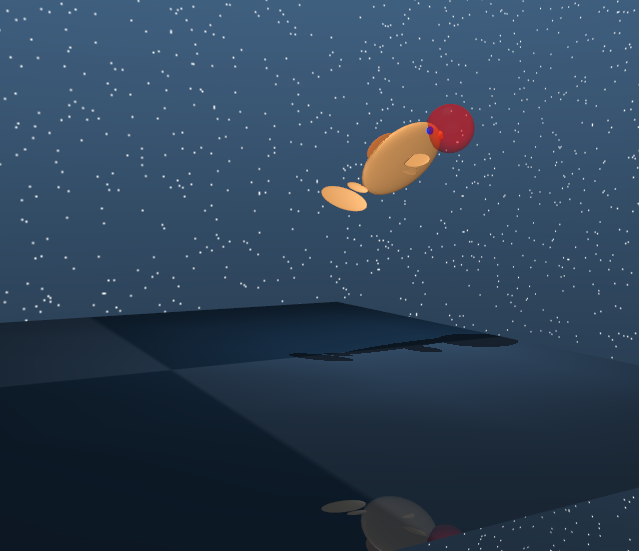}}
\end{multicols}
\begin{multicols}{2}
      \subfloat[Initial state]{\includegraphics[width=1\linewidth]{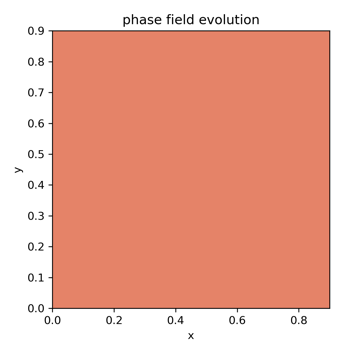}}  
      \subfloat[Final state]{\raisebox{-0.1cm}{\includegraphics[width=1.05\linewidth]{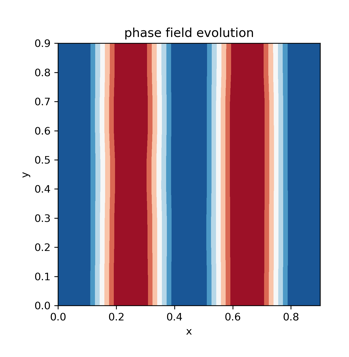}}}
\end{multicols}
\caption{\small Models controlled in this paper including multi-body systems with fluid-structure interactions, and a material microstructure model governed by the Allen-Cahn phase field partial differential equation.}
\label{model_pics}
\end{figure}

\begin{figure}[!hb]
    \centering
\includegraphics[width=1\linewidth]{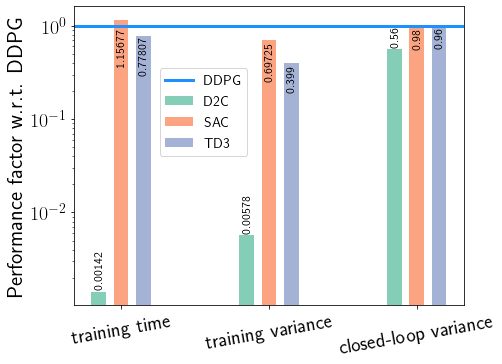}
\caption{\small Local RL approach (D2C) vs global RL approaches. The statistics shown above, found by averaging over all the models simulated, shows that although TD3 and SAC have improvement over DDPG, the local approach is still highly efficient (training time), reliable (training variance), while also having superior closed-loop performance (closed-loop variance), when compared to the global RL approaches.}
\label{outline}
\end{figure}

In this paper, we advocate that for RL to be a) efficient in training, b) reliable in its result, and c) have robust performance to noise, one needs to use a local feedback parametrization as opposed to a global parametrization. Further, this local feedback parametrization consists of an open-loop control sequence combined with a linear feedback law around the nominal open-loop sequence. Searching over this parametrization is highly efficient when compared to the global RL search, can be shown to reliably converge to the global optimum, while having performance that is superior to the global RL solution. In particular, this search is sufficiently fast and reliable that one can recover the optimal global feedback law by replanning whenever necessary. The sole caveat is that these claims are true for a deterministic, albeit unknown, system. However, we show that: 1) theoretically, the deterministic optimal feedback law is near-optimal to fourth order in a small noise parameter to the optimal stochastic law, and 2) empirically, RL methods have difficulty in learning on stochastic systems, so much so that most RL algorithms typically find a feedback law for the deterministic system in which the only noise is an asymptotically vanishing exploration noise.\\

In general, the solution approaches to the problem of controlling unknown dynamical systems can be divided into two broad classes, local and global.

The most computationally efficient among these techniques are ``local" trajectory-based methods such as differential dynamic programming (DDP), \cite{ddp,sddp}, which quadratizes the dynamics and the cost-to-go function around a nominal trajectory, and the iterative linear quadratic regulator (iLQR), \cite{ilqg1,ilqg2}, which only linearizes the dynamics, and thus, is more efficient. Our approach, the Decoupled Data based Control (D2C), requires the model-free/ data-based solution of the open-loop optimization along with a local linear feedback, and thus, we use iLQR with an efficient randomized least squares procedure to estimate the linear system parameters, using simulated rollouts of the system. This local approach to control unknown systems has been explored before \cite{ILQG_tassa2012synthesis, RLHD5}, however, they have always been characterized as ``trajectory optimization" techniques and have been thought of as distinct from RL.  In particular, it was never established that this local approach to RL is highly efficient as well as reliable, in terms of training variance, compared to global approaches. Further, the local approaches are superior in performance in terms of robustness to noise, i.e., they have better ``global" performance compared to global approaches. The reliability of iLQR comes from its guaranteed convergence to the global optimum: we note that iLQR is a Sequential Quadratic Programming (SQP) approach to the optimal control scenario, using which we show that under relatively mild assumptions, iLQR converges to the global minimum from any initial guess. Thus, we can expect that iLQR always yields the same optimal result from different runs. Further, we establish that such local approaches can recover global optimality when coupled with replanning, whenever necessary a la Model Predictive Control (MPC) \cite{Mayne_1,Mayne_2}, which becomes feasible because of the highly efficient and reliable local search that is guaranteed to converge to a globally optimum open-loop solution, and the associated optimal linear feedback law. \\
 

Global methods, more popularly known as approximate dynamic programming   \cite{powell2007approximate,dp_bertsekas} or reinforcement learning (RL) methods \cite{sutton2018reinforcement}, seek to improve the control policy by repeated interactions with the environment while observing the system's responses. The repeated interactions, or learning trials, allow these algorithms to compute the solution of the dynamic programming problem (optimal value/Q-value function or optimal policy) either by constructing a model of the dynamics (model-based) \cite{pilco,Kumar2016, Mitrovic2010}, or directly estimating the control policy (model-free) \cite{sutton2018reinforcement,DDPG,trpo}. Standard RL algorithms are broadly divided into value-based methods, like Q-learning, and policy-based methods, like policy gradient algorithms. Recently, function approximation using deep neural networks has significantly improved the performance of reinforcement learning algorithms, leading to a growing class of literature on `deep reinforcement learning' \cite{acktr,trpo,ppo}. Despite the success, the training time required by such methods, and their variance,  remains prohibitive. 
Our primary contribution in this paper is to show that performing RL via a local feedback parametrization, is highly efficient and reliable, in terms of low variance, when compared to the global approaches. They are also superior in terms of robustness to noise, i.e., their performance is better ``globally" than the global methods.  Finally, we show that global optimality, and hence performance, can be recovered by replanning, that is made feasible via the fast and reliable local planner.\\   

The rest of the paper is organized as follows. In Section \ref{sec2}, the basic problem formulation is outlined. In Section \ref{sec3}, the main decoupling results which solve the stochastic optimal control problem in a 'decoupled open-loop - closed-loop' fashion are briefly summarized. In Section \ref{sec4}, we outline the iLQR based decoupled data based control algorithm and prove its global convergence. In Section \ref{sec5}, we test the proposed approach using typical benchmarking examples with comparisons to a state-of-the-art RL technique. This paper is an extension based on the preliminary results in our conference paper \cite{cdc2021}. In this paper, we provide detailed proof for the near-optimal decoupling principle in Section \ref{sec3}. Also, we show the global convergence of iLQR algorithm and outline the complete D2C algorithm in Section \ref{sec4}. As for empirical results, we add detailed model information used in our simulation experiments and presented new empirical results regarding learning performance of global RL methods when trained on stochastic systems.
\section{Problem Formulation}\label{sec2}
Consider the following discrete time nonlinear stochastic dynamical system:
$
x_{t+1} = F(x_{t}, u_{t},  w_{t}),
$
where $x_t \in \mathbb{R}^{n_x}$,   $u_t \in \mathbb{R}^{n_u}$ are the state  measurement and control vector at time $t$, respectively. The process noise $w_{t}$ is assumed as zero-mean, uncorrelated Gaussian white noise, with covariance $W$. 
The \emph{stochastic optimal control} problem is to find the the control policy $\pi^{o} = \{\pi^{o}_0, \pi^{o}_2, \cdots, \pi^{o}_{T-1} \}$ such that the expected cumulative cost is minimized, i.e., 
$
\pi^{o} = \arg \min_{\pi}  ~ \tilde{J}^{\pi}(x)$, where,  
$\tilde{J}^{\pi}(x) = \mathbb{E}_{\pi} \left[ \sum_{t = 0}^{T-1} c_t(x_{t}, u_{t}) + c_{T}(x_{T}) | x_{0} = x \right]$,
$u_{t} = \pi_{t}(x_{t})$, $c_t(\cdot, \cdot)$ is the instantaneous cost function, and $c_{T}(\cdot)$ is the terminal cost function.  In the following, we assume that the initial state $x_{0}$ is fixed, and denote $\tilde{J}^{\pi}(x_0)$ simply as $\tilde{J}^{\pi}$. 

\section{A Near-Optimal Decoupling Principle}\label{sec3}
We first outline a near-optimal decoupling principle in stochastic optimal control that paves the way for the D2C algorithm described in Section \ref{sec4}. 

Let the dynamics be given by:
\begin{equation}\label{SDE0}
x_t = x_{t-1} + \bar{f}(x_{t-1}) \Delta t + \bar{g}(x_{t-1}) u_t \Delta t + \epsilon \omega_t \sqrt{\Delta t},
\end{equation}
where $\omega_t$ is a white noise sequence, and the sampling time $\Delta t$ is small enough that the $O(\Delta t ^\alpha)$ terms are negligible for $\alpha > 1$. The noise term above stems from Brownian motion, and hence the $\sqrt{\Delta t}$ factor. Further, the incremental cost function $c(x,u)$ is given as:
$c_t(x,u) = \bar{l}_t(x) \Delta t + \frac{1}{2} u'\bar{R}u \Delta t.$ Then, we have the following results. Given sufficient regularity, any feedback policy can then be represented as: $\pi_t (x_t) = \bar{u}_t + K_t^1 \delta x_t + \delta x_t' K_t^2 \delta x_t + \cdots$, where $\bar{u}_t$ is the nominal action with associated nominal state $\bar{x}_t$, i.e., action under zero noise, and $K_t^1, K_t^2, \cdots$ represent the linear and higher order feedback gains acting on the state deviation from the nominal: $\delta x_t = x_t -\bar{x}_t$, due to the noise.\\

\begin{proposition}
\label{prop_e4}
The cost function of the optimal stochastic policy, $J_t$, and the cost function of the ``deterministic policy  applied to the stochastic system", $\varphi_t$, satisfy: $J_t(x) = J_t^0 (x) + \epsilon^2 J_t^1(x) + \epsilon^4 J_t^2 (x)+ \cdots$, and $\varphi_t(x) = \varphi_t^0 (x) + \epsilon^2 \varphi_t^1 (x) + \epsilon^4 \varphi_t^2(x) + \cdots$. Furthermore, $J_t^0(x) = \varphi_t^0 (x)$, and $J_t^1 = \varphi_t^1(x)$, for all $t,x$.
\end{proposition}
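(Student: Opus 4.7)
The plan is to proceed by backward induction on $t$, using the Bellman recursions for $J_t$ and $\varphi_t$ together with an envelope-theorem argument at the deterministic optimum. At the terminal stage $t = T$, both value functions equal the terminal cost $c_T(x)$, which has no $\epsilon$ dependence, so the base case is trivial.

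First I would establish that both expansions contain only even powers of $\epsilon$. The Bellman equation for $J_t$ reads $J_t(x) = \min_u \{ c(x,u) + \mathbb{E}[J_{t+1}(x + \bar f(x)\Delta t + \bar g(x) u \Delta t + \epsilon \omega_t \sqrt{\Delta t})] \}$, and the recursion for $\varphi_t$ is the same expression with the deterministic optimal action $\pi_t^d(x)$ substituted in place of the minimization. Taylor-expanding the inner value function in the noise increment $\epsilon \omega_t \sqrt{\Delta t}$ and taking the expectation annihilates every odd power of $\epsilon$, because $\omega_t$ is Gaussian with zero mean and vanishing odd central moments. Combined with the induction hypothesis that $J_{t+1}$ and $\varphi_{t+1}$ admit even-power expansions, this propagates the claimed structure $J_t = J_t^0 + \epsilon^2 J_t^1 + \epsilon^4 J_t^2 + \cdots$, and likewise for $\varphi_t$.

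The zeroth-order equality $J_t^0 = \varphi_t^0$ is then immediate: setting $\epsilon = 0$ turns the stochastic system deterministic, and the minimizer of the Bellman equation for $J_t$ coincides by construction with the deterministic optimal action $\pi_t^d$ used in the $\varphi_t$ recursion, so both iterations produce the same cost-to-go. This handles the base of the expansion.

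The main step is the second-order equality $J_t^1 = \varphi_t^1$, which I would obtain from an envelope argument. Let $u_t^*(x, \epsilon)$ denote the minimizer in the stochastic Bellman equation. Expanding in $\epsilon^2$ gives $u_t^*(x, \epsilon) = \pi_t^d(x) + \epsilon^2 \tilde u_t(x) + O(\epsilon^4)$, since at $\epsilon = 0$ the minimizer is the deterministic optimum. Because $\pi_t^d(x)$ satisfies the first-order optimality condition of the $\epsilon = 0$ problem, replacing $u_t^*$ by $\pi_t^d$ in the stochastic right-hand side changes the value by at most $O((\epsilon^2)^2) = O(\epsilon^4)$. After this substitution the right-hand side is exactly the recursion defining $\varphi_t$, except with $J_{t+1}$ in place of $\varphi_{t+1}$; applying the induction hypothesis $J_{t+1} - \varphi_{t+1} = O(\epsilon^4)$ closes the loop and yields $J_t(x) = \varphi_t(x) + O(\epsilon^4)$, equivalent to $J_t^0 = \varphi_t^0$ and $J_t^1 = \varphi_t^1$. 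The main obstacle will be making the envelope step fully rigorous: I would need to justify interchange of the Taylor expansion with the expectation, use the positive-definiteness of $\bar R$ together with regularity of the optimal cost-to-go to ensure a smooth minimizer $u_t^*(x, \epsilon)$, and control the $O(\epsilon^4)$ remainders uniformly in $x$ at each inductive step.
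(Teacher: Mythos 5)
Your proposal is correct in outline, but it takes a genuinely different route from the paper. The paper's proof is a direct perturbation computation: it substitutes the closed-form minimizer $u_t=-R^{-1}\bar g' J_{t+1}^x$ (available because the dynamics are affine and the cost quadratic in $u$) into the DP recursion, inserts the even-power ansatz, and collects the $\epsilon^0$ and $\epsilon^2$ coefficients to obtain explicit backward transport equations for $J_t^0,J_t^1$ and for $\varphi_t^0,\varphi_t^1$; the equalities then follow because the two pairs of recursions (and their terminal conditions) are literally identical, both being driven by the same closed-loop drift $\bar f^0$ and the same forcing term $\tfrac12 J_{t+1}^{0,xx}$. You instead run a backward induction with an envelope-theorem step: the stochastic argmin deviates from the deterministic one by $O(\epsilon^2)$, stationarity squares this to an $O(\epsilon^4)$ change in value, and the induction hypothesis $J_{t+1}-\varphi_{t+1}=O(\epsilon^4)$ closes the recursion. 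Your route is more conceptual and more general --- it explains \emph{why} the agreement is fourth order without needing the explicit minimizer, and it would survive relaxing the control-affine/quadratic-in-$u$ structure --- and you also justify the even-powers claim via vanishing odd Gaussian moments, which the paper simply asserts. What the paper's computation buys in exchange is the explicit identification $J_t^0=\phi_t$ and the formula for $J_t^1$ as the solution of a linear transport equation forced by $\tfrac12 J^{0,xx}$ along the deterministic closed loop, which is used elsewhere in the development. The one point you should tighten is the uniformity (in $x$) of the $O(\epsilon^4)$ remainders carried through the induction, and the smoothness of $u_t^*(x,\epsilon)$ in $\epsilon$, which you correctly flag but which is where the rigor actually resides; note also that the cleaner statement of the envelope step is that $u_t^*(\epsilon)$ satisfies the first-order condition of the $\epsilon$-problem and $\pi_t^d$ lies within $O(\epsilon^2)$ of it, though your variant leads to the same bound.
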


\begin{proof}
We show the result for the scalar case for simplicity (and completeness), the vector state case is relatively straightforward to derive, please refer to our paper \cite{arxivpaper1}.
 The DP equation for the given system is given by:
\begin{equation}\label{DP}
J_t(x) = \min_{u_t} \{c_t(x,u_t) + E[J_{t+1}(x')]\},
\end{equation}
where $x' = x + \bar{f}(x)\Delta t + \bar{g}(x) u_t \Delta t + \epsilon \omega_t \sqrt{\Delta t}$ and $J_t(x)$ denotes the cost-to-go of the system given that it is at state $x$ at time $t$. The above equation is marched back in time with terminal condition $J_T(x) = c_T(x)$, and $c_T(\cdot)$ is the terminal cost function. Let $u_t(\cdot)$ denote the corresponding optimal policy. Then, it follows that the optimal control $u_t$ satisfies (since the argument to be minimized is quadratic in $u_t$)
\begin{equation}\label{opt_control}
u_t= -R^{-1}\bar{g}' J_{t+1}^x,
\end{equation}
where $J_{t+1}^x = \frac{\partial J_{t+1}}{\partial x}$.\\
We know that any cost function, and hence, the optimal cost-to-go function can be expanded in terms of $\epsilon$ as:
\begin{equation} \label{f1}
J_t(x) = J_t^0 + \epsilon^2 J_t^1 + \epsilon^4 J_t^2 + \cdots \\
\end{equation}
Thus, substituting the minimizing control in Eq. \ref{opt_control} into the dynamic programming Eq. \ref{DP} implies:
\begin{align}
J_t(x) = \bar{l}_t(x)\Delta t + \frac{1}{2} r(\frac{-\bar{g}}{r})^2 (J_{t+1}^x)^2 \Delta t + J_{t+1}^x \bar{f}(x)\Delta t  \nonumber \\
+ \bar{g}(\frac{-\bar{g}}{r})(J_{t+1}^x)^2 \Delta t+ \frac{\epsilon^2}{2} J_{t+1}^{xx} \Delta t + J_{t+1}(x), \label{f2}
\end{align}
where $J_t^{x}$, and $J_t^{xx}$ denote the first and second derivatives of the cost-to go function. Substituting Eq. \ref{f1} into eq. \ref{f2} we obtain that:
\begin{align}
(J_t^0 + \epsilon^2 J_t^1 + \epsilon^4 J_t^2+\cdots) = \bar{l}_t(x)\Delta t \nonumber \\ + 
\frac{1}{2}\frac{\bar{g}^2}{r}(J_{t+1}^{0,x}
+ \epsilon^2 J_{t+1}^{1,x}+\cdots)^2 \Delta t   \nonumber \\
+(J_{t+1}^{0,x}+ \epsilon^2 J_{t+1}^{1,x}+\cdots) \bar{f}(x) \Delta t \nonumber \\
- \frac{\bar{g}^2}{r} (J_{t+1}^{0,x}+ \epsilon^2 J_{t+1}^{1,x}+\cdots)^2\Delta t  \nonumber \\
+ \frac{\epsilon^2}{2} (J_{t+1}^{0,x}+ \epsilon^2 J_{t+1}^{1,x}+\cdots)\Delta t + J_{t+1}(x). \label{f3}
\end{align}
Now, we equate the $\epsilon^0$, $\epsilon^2$ terms on both sides to obtain perturbation equations for the cost functions $J_t^0, J_t^1, J_t^2 \cdots$. \\
First, let us consider the $\epsilon^0$ term. Utilizing Eq. \ref{f3} above, we obtain:
\begin{align} \label{f4}
J_t^0  = \bar{l}_t \Delta t + \frac{1}{2} \frac{\bar{g}^2}{r}(J_{t+1}^{0,x})^2 \Delta t + 
\underbrace{(\bar{f} + \bar{g}\frac{-\bar{g}}{r} J_t^{0,x})}_{\bar{f}^0} J_t^{0,x}\Delta t + J_{t+1}^0, 
\end{align}
with the terminal condition $J_T^0 = c_T$, and where we have dropped the explicit reference to the argument of the functions $x$ for convenience. \\
Similarly, one obtains by equating the $O(\epsilon^2)$ terms in Eq. \ref{f3} that:
\begin{align}
J_t^1  = \frac{1}{2} \frac{\bar{g}^2}{r} (2J_{t+1}^{0,x}J_{t+1}^{1,x})\Delta t + J_{t+1}^{1,x} \bar{f}\Delta t -\frac{\bar{g}^2}{r} (2J_{t+1}^{0,x}J_{t+1}^{1,x})\Delta t  \nonumber \\
+ \frac{1}{2} J_{t+1}^{0,xx} \Delta t + J_{t+1}^1,
\end{align}
which after regrouping the terms yields:
\begin{align} \label{f5}
J_t^1 = \underbrace{(\bar{f}+ \bar{g} \frac{-\bar{g}}{r} J_{t+1}^{0,x}) }_{= \bar{f}^0}J_{t+1}^{1,x}\Delta t + \frac{1}{2} J_{t+1}^{0,xx} \Delta t + J_{t+1}^1,
\end{align}
with terminal boundary condition $J_T^1 = 0$.
Note the perturbation structure of Eqs. \ref{f4} and \ref{f5}, $J_t^0$ can be solved without knowledge of $J_t^1, J_t^2$ etc, while $J_t^1$ requires knowledge only of $J_t^0$, and so on. In other words, the equations can be solved sequentially rather than simultaneously.\\
Now, let us consider the deterministic policy $u_t^d(\cdot)$ that is a result of solving the deterministic DP equation:
\begin{equation}
\phi_t(x) = \min_{u^d_t} [c_t(x,u^d_t) + \phi_{t+1}(x')],
\end{equation}
where $x' = x + \bar{f}\Delta t + \bar{g} u^d_t \Delta t$, i.e., the deterministic system obtained by setting $\epsilon =0$ in Eq. \ref{SDE0}, and $\phi_t$ represents the optimal cost-to-go of the deterministic system. Analogous to the stochastic case, $u_t^d = \frac{-\bar{g}}{r} \phi_{t+1}^x$.
Next, let $\varphi_t$ denote the cost-to-go of the deterministic policy $u_t^d(\cdot)$ \textit{when applied to the stochastic system, i.e., Eq. \ref{SDE0} with $\epsilon >0$}. Then, the cost-to-go of the deterministic policy, when applied to the stochastic system, satisfies:
\begin{equation}
\varphi_t = c_t(x, u_t^d(x)) + E[\varphi_{t+1}(x')],
\end{equation}
where $x' = x+\bar{f}\Delta t + \bar{g}u_t^d \Delta t + \epsilon\sqrt{\Delta t}\omega_t$. Substituting $u_t^d(\cdot) = \frac{-\bar{g}}{r}\phi_t^x$ into the equation above implies that:
\begin{align}
\varphi_t = \varphi_t^0 + \epsilon^2\varphi_t^1 + \epsilon^4 \varphi_t^2+ \cdots \nonumber\\
= \bar{l}_t \Delta t + \frac{1}{2}\frac{\bar{g}^2}{r}(\phi_{t+1}^x)^2 \Delta t \nonumber
+ (\varphi_{t+1}^{0,x} + \epsilon^2\varphi_{t+1}^{1,x} + \cdots )\bar{f} \Delta t  \nonumber \\
+  \bar{g}\frac{-\bar{g}}{r} \phi_{t+1}^x (\varphi_{t+1}^{0,x} + \epsilon^2\varphi_{t+1}^{1,x} + \cdots ) \Delta t \nonumber\\
+ \frac{\epsilon^2}{2} (\varphi_{t+1}^{0,xx} + \epsilon^2\varphi_{t+1}^{1,xx} + \cdots ) \Delta t 
+  (\varphi_{t+1}^0 + \epsilon^2\varphi_{t+1}^1 + \cdots). \label{f6}
\end{align}
As before, if we gather the terms for $\epsilon^0$, $\epsilon^2$ etc. on both sides of the above equation, we shall get the equations governing $\varphi_t^0, \varphi_t^1$ etc.  First, looking at the $\epsilon^0$ term in Eq. \ref{f6}, we obtain:
\begin{align} \label{f7}
\varphi_t^0 &= \bar{l}_t \Delta t + \frac{1}{2} \frac{\bar{g}^2}{r}(\phi_{t+1}^x)^2 \Delta t  
+ (\bar{f} + \bar{g} \frac{-\bar{g}}{r}\phi_{t+1}^x )\varphi_{t+1}^{0,x}\Delta t \nonumber\\ &+ \varphi_{t+1}^0,    
\end{align}
with the terminal boundary condition $\varphi_T^0 = c_T$. However, the deterministic cost-to-go function also satisfies:
\begin{align}\label{f8}
\phi_t &= \bar{l}_t \Delta t + \frac{1}{2} \frac{\bar{g}^2}{r} (\phi_{t+1}^x)^2 \Delta t  
+( \bar{f}   + \bar{g} \frac{-\bar{g}}{r} \phi_{t+1}^x) \phi_{t+1}^x \Delta t \nonumber\\ &+ \phi_{t+1},
\end{align}
with terminal boundary condition $\phi_T = c_T$. Comparing Eqs. \ref{f7} and \ref{f8}, it follows that $\phi_t = \varphi_t^0$ for all $t$. Further, comparing them to Eq. \ref{f4}, it follows that $\varphi_t^0 = J_t^0$, for all $t$. Also, note that the closed-loop system above, $\bar{f}   + \bar{g} \frac{-\bar{g}}{r} \phi_{t+1}^x = \bar{f}^0$ (see Eq. \ref{f4} and \ref{f5}). \\
Next let us consider the $\epsilon^2$ terms in Eq. \ref{f6}. We obtain:
\begin{align}
\varphi_t^1 = \bar{f} \varphi_{t+1}^{1,x} \Delta t + \bar{g}\frac{-\bar{g}}{r} \phi_{t+1}^x \varphi_{t+1}^{1,x} \Delta t + \nonumber
\frac{1}{2}\varphi_{t+1}^{0,xx} + \varphi_{t+1}^1.
\end{align}
Noting that $\phi_t = \varphi_t^0 $, implies that (after collecting terms):
\begin{align} \label{f9}
\varphi_t^1 = \bar{f}^0 \varphi_{t+1}^{1,x} \Delta t + \frac{1}{2}\varphi_{t+1}^{0,xx} \Delta t + \varphi_{t+1}^1,
\end{align}
with terminal boundary condition $\varphi_T^1 = 0$. Again, comparing Eq. \ref{f9} to Eq. \ref{f5}, and noting that $\varphi_t^0 = J_t^0$,  it follows that $\varphi_t^1 = J_t^1$, for all $t$. This completes the proof of the result. 
\end{proof}

\begin{remark}
The result above shows that the first two terms $J_t^0(x)$ and $J_t^1(x)$ in the perturbation expansion are identical for the optimal deterministic and optimal stochastic policies, when acting on the stochastic system, given they both start at state $x$ at time $t$. This essentially means that the optimal deterministic policy and the optimal stochastic policy agree locally up to order $O(\epsilon^4)$.
\end{remark}
\begin{remark}
It may also be shown that for the optimal deterministic policy, the $\epsilon^0$ term, ${\varphi}^{0}_t$, in the cost, stems from the nominal action ($\bar{u}_t$) of the control policy, the $\epsilon^2$ term, $\varphi^{1}_t$, stems from the linear feedback action of the closed-loop ($K_t^1$), while the higher-order terms stem from the higher-order terms in the feedback law. 
\end{remark}
An important practical consequence of Proposition \ref{prop_e4} is that we can get $O(\epsilon^4)$ near-optimal performance, by wrapping the optimal linear feedback law around the nominal control sequence ($u_t = \bar{u}_t + K_t\delta x_t$), where $\delta x_t$ is the state deviation from the nominal $\bar{x}_t$ state, and replanning the nominal sequence when the deviation is sufficiently large (for convenience, we have dropped the superscript $1$ in denoting the linear feedback gain above). This is similar to the event-driven MPC philosophy of \cite{ETMPC1,ETMPC2}. In the deterministic optimal control problem, the open-loop ($\bar{u}_t$) design is independent of the closed-loop design ($K_t$) which suggests the following ``decoupled" procedure to find the optimal feedback law (locally), i.e., we may first design the optimal open-loop sequence and then find the optimal linear feedback corresponding to said open-loop sequence.

\textbf{\textit{Open-Loop Design.}} First, we design an optimal (open-loop) control sequence $\bar{u}^{*}_t$ for the noiseless system by solving
$
\label{OL}
(\bar{u}^{*}_t)^{T-1}_{t=0} = \arg \min_{(\bar{u}_t)^{T-1}_{t=0}} \sum_{t=0}^{T-1} c_t(\bar{x}_t, \bar{u}_t) + c_T(\bar{x}_T), 
$
~with $\bar{x}_{t+1} = \bar{x}_t+f(\bar{x}_t)\Delta t + g (\bar{x}_{t}) \bar{u}_t\Delta t$. Denote $\mathcal{F}(x) = x + f(x)\Delta t$ and $\mathcal{G}(x) = g(x) \Delta t$ with reference to Eq. \ref{SDE0}.  
The global optimum for this open-loop problem can be found by satisfying the necessary conditions of optimality, as was shown in the first part of \cite{arxivpaper1} using the Method of Characteristics. We propose a data based open-loop solver based on the iLQR algorithm in the next Section.


\textbf{\textit{Closed-Loop Design.}} The optimal linear feedback gain $K_t$ corresponding to the nominal trajectory above is calculated as shown in the following result. In the following, $A_t = \frac{\partial \mathcal{F}}{\partial x}|_{\bar{x}_t} + \frac{\partial \mathcal{G}\bar{u}_t}{\partial x}|_{\bar{x}_t}$, $B_t = \mathcal{G}(\bar{x}_t)$, $L^x_t = \frac{\partial l_t}{\partial x}|_{\bar{x}_t}'$ and $L^{xx}_t = \nabla^2_{xx} l_t |_{\bar{x}_t}$. Let $\phi_t(x_t)$ denote the optimal cost-to-go of the detrministic problem, i.e., Eq. \ref{SDE0} with $\epsilon = 0$. 
\begin{proposition} \label{T-PFC}
Given an optimal nominal trajectory $(\bar{x}_t, \bar{u}_t)$, the backward evolutions of the first and second derivatives, $G_t = \frac{\partial \phi_t}{\partial x}|_{\bar{x}_t}'$ and $P_t = \nabla^2_{xx} \phi_t|_{\bar{x}_t}$, of the optimal cost-to-go function ${\phi}_t({ x_t})$, initiated with the terminal boundary conditions $G_T = \frac{\partial {c}_T}{\partial { x}}\arrowvert_{\bar{x}_T}' $ and $P_T = \nabla^2_{xx} c_T\arrowvert_{\bar{x}_T}$ respectively, are as follows: 
\begin{align}
G_t = L^x_t + G_{t+1}A_t,& \label{OLprop6}\\
P_t = L^{xx}_t + A_t' P_{t+1} A_t - K'_t R_t K_t + G_{t+1} \otimes \tilde R_{t,xx}&  \label{feedback}
\end{align}
for $t = \{0,1,...,T-1\}$, where, 
\begin{align}
    K_t = -R_t^{-1} (B_t'P_{t+1}A_t+(G_{t+1} \otimes \tilde R_{t,xu})'),\label{feedbackgain}
\end{align}
$ \tilde{ R}_{t,xx} = \nabla^{2}_{xx}\mathcal{F}({ x_t})\arrowvert_{{\bar{x}_t}} + \nabla^{2}_{xx}\mathcal{G}({x_t})\arrowvert_{{\bar{x}_t}}{\bar{u}_t}, \\
\tilde{ R}_{t,xu} = \nabla^{2}_{xu}(\mathcal{F}({x_t}) + \mathcal{G}({x_t}){{u}_t}) \arrowvert_{\bar{x}_t, \bar{u}_t}$, where $\nabla^2_{xx}$ represents the Hessian of a vector-valued function w.r.t $x$, similar for $\nabla^2_{xu}$, and $\otimes$ denotes the kronecker product. 
\end{proposition}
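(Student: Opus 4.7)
The plan is to derive the recursions by a second-order Taylor expansion of the deterministic Bellman equation about the nominal trajectory $(\bar{x}_t, \bar{u}_t)$. Start from
\[
\phi_t(x_t) \;=\; \min_{u_t}\Big\{ c(x_t,u_t) + \phi_{t+1}\big(\mathcal{F}(x_t) + \mathcal{G}(x_t)u_t\big)\Big\},
\]
and introduce the perturbations $\delta x_t = x_t - \bar{x}_t$, $\delta u_t = u_t - \bar{u}_t$. I would then write $\phi_{t+1}$ to second order as $\phi_{t+1}(\bar{x}_{t+1} + \delta x_{t+1}) \approx \phi_{t+1}(\bar{x}_{t+1}) + G_{t+1}\delta x_{t+1} + \tfrac{1}{2}\delta x_{t+1}' P_{t+1}\delta x_{t+1}$, and expand the propagated deviation $\delta x_{t+1} = A_t \delta x_t + B_t \delta u_t + \tfrac{1}{2}(\delta x_t, \delta u_t)^\top H_t (\delta x_t,\delta u_t) + \cdots$, where $H_t$ encodes the Hessians $\tilde{R}_{t,xx}$ and $\tilde{R}_{t,xu}$ of $\mathcal{F}+\mathcal{G}u$. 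The instantaneous cost is expanded similarly using $L_t^x$ and $L_t^{xx}$ and the control-quadratic $R_t$.

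Next, I would collect terms in the right-hand side of the Bellman equation up to second order in $(\delta x_t, \delta u_t)$. The key observation is that the cross terms of the form $G_{t+1} \cdot (\text{quadratic in }(\delta x,\delta u))$ contribute genuinely quadratic pieces to the Hessian even though $G_{t+1}$ is only a gradient; this is exactly where the tensor products $G_{t+1} \otimes \tilde{R}_{t,xx}$ and $G_{t+1}\otimes \tilde{R}_{t,xu}$ arise. I would then minimize the resulting quadratic-in-$\delta u_t$ expression, which yields the first-order condition
\[
(R_t + B_t'P_{t+1}B_t)\,\delta u_t + \big(B_t'P_{t+1}A_t + (G_{t+1}\otimes \tilde R_{t,xu})'\big)\delta x_t \;=\; 0
\]
(using that the nominal already satisfies the first-order optimality condition for the open-loop problem, so the $\delta u_t$-linear term from $G_{t+1}B_t + R_t\bar{u}_t$ vanishes). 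Solving for $\delta u_t$ produces precisely $K_t^1$ as stated, with $S_t = R_t + B_t'P_{t+1}B_t$.

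Finally, I would substitute the optimal $\delta u_t = K_t^1 \delta x_t$ back into the expanded Bellman equation and match powers of $\delta x_t$. Matching the constant term reproduces the nominal cost-to-go; matching the $\delta x_t$-linear coefficient gives $G_t = L_t^x + G_{t+1}A_t$ (the gradient recursion \eqref{OLprop6}); and matching the $\delta x_t \delta x_t'$ quadratic coefficient, after collecting the contribution $K_t' S_t K_t$ with the appropriate sign coming from completing the square, yields \eqref{feedback}. The terminal boundary conditions $G_N$ and $P_N$ follow immediately from differentiating $\phi_N = c_N$ at $\bar{x}_N$.

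The main obstacle will be the careful bookkeeping of the tensor-valued second derivatives of the dynamics: because $\mathcal{F}$ and $\mathcal{G}$ are vector-valued, their Hessians are third-order tensors, and one must track how they contract with the gradient $G_{t+1}$ to produce the matrix-valued corrections $G_{t+1}\otimes \tilde R_{t,xx}$ and $G_{t+1}\otimes \tilde R_{t,xu}$ in \eqref{feedback} and \eqref{feedbackgain}. Once this indexing is made precise, the remainder reduces to completing the square in $\delta u_t$ and identifying coefficients, just as in the standard LQR/DDP derivation.
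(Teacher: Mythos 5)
Your proposal is correct and follows essentially the same route as the paper's proof: a second-order Taylor expansion of the deterministic Bellman equation about the nominal, substitution of the expanded dynamics (whose second-order remainder contracted with $G_{t+1}$ produces the tensor-product terms), use of the nominal first-order optimality condition to eliminate the $\delta u_t$-linear term, minimization over $\delta u_t$ to obtain $K_t^1$, and coefficient matching in $\delta x_t$ to read off the $G_t$ and $P_t$ recursions. No substantive differences to report.
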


\begin{proof}
Consider the Dynamic Programming equation for the deterministic cost-to-go function:
\begin{align*}
{\phi}_t({x_t}) = \mathop{min}_{{u_t}} Q_t({ x_t},{u_t}) = \mathop{min}_{{u_t}}\{c_t({x_t}, {u_t}) + {\phi}_{t+1}({x_{t+1}})\}
\end{align*}
By Taylor's expansion about the nominal state at time $t+1$,
\begin{align*}
{\phi}_{t+1}({x_{t+1}})&={\phi}_{t+1}({ \bar{x}_{t+1}}) + G_{t+1} \delta {x_{t+1}}  \nonumber \\
&~+ \frac{1}{2}\delta {x_{t+1}}' P_{t+1} \delta {x_{t+1}} + q_{t+1}(\delta {x_{t+1}}),
\end{align*}
where $q_{t+1}(\cdot)$ denotes the higher order terms. Substituting the perturbation expansion of the dynamics, \(\delta {x_{t+1}} = A_t \delta { x_t} + B_t \delta {u_t} + r_t(\delta {x_t}, \delta {u_t})\) in the above expansion, where $r_t(\cdot)$ denotes the linearization residual, 
\begin{multline}
 {\phi}_{t+1}({x_{t+1}}) = {\phi}_{t+1}({ \bar{x}_{t+1}}) +  G_{t+1} (A_t \delta { x_t} + B_t \delta { u_t}\\ + r_t(\delta {x_t}, \delta {u_t}) )  + \frac{1}{2}( A_t \delta {x_t} + B_t \delta {u_t} + r_t(\delta {x_t}, \delta {u_t}))' P_{t+1} (A_t \delta {x_t} 
\\+ B_t \delta {u_t}  +  r_t(\delta {x_t}, \delta {u_t}) ) +q_{t+1}(\delta {x_{t+1}}).  
\end{multline}
Similarly, expand the incremental cost at time $t$ about the nominal state, where $s_t(\cdot)$ denotes the higher order terms,
\begin{align*}
c_t({x_t}, {u_t}) &= \bar{l}_t + L^x_t \delta {x_t} + \frac{1}{2} \delta {x_t}' L^{xx}_{t} \delta {x_t} + \frac{1}{2} \delta {u_t}' R_t {\bar{u}_t} \nonumber \\
&+ \frac{1}{2}  {\bar{u}_t}' R_t \delta {u_t}  
+ \frac{1}{2} \delta {u_t}' R_t \delta {u_t}  + \frac{1}{2}{\bar{u}_t}' R_t {\bar{u}_t} + s_t(\delta {x_t}).
\end{align*}
\begin{multline}
Q_t({x_t},{u_t}) = \overbrace{[\bar{l}_t + \frac{1}{2} {\bar{u}_t}' R_t { \bar{u}_t} + {\phi}_{t+1}({\bar{x}_{t+1}}) ]}^{\bar{\phi}_t({\bar{x}_t}, \bar{u}_t)} \nonumber \\
+ L^x_t \delta {x_t} + \frac{1}{2} \delta {x_t}' L^{xx}_{t} \delta {x_t}
+ \delta {u_t}'(B_t' \frac{P_{t+1}}{2} B_t + \frac{1}{2} R_t) \delta {u_t}  \nonumber \\
+ \delta {u_t}'(B_t' \frac{P_{t+1}}{2} A_t \delta {x_t} 
+ \frac{1}{2} R_t {\bar{u}_t} +B_t' \frac{P_{t+1}}{2}r_t)  \nonumber \\
+ (\delta { x_t}' A_t' \frac{P_{t+1}}{2}B_t 
+ \frac{1}{2} {\bar{u}_t} R_t 
+r_t' \frac{P_{t+1}}{2}B_t + G_{t+1}B_t) \delta {u_t} \nonumber \\
+ \delta { x_t}' A_t' \frac{P_{t+1}}{2}A_t \delta {x_t} 
+ \delta {x_t}'A_t' \frac{P_{t+1}}{2} r_t+ (r_t' \frac{P_{t+1}}{2}A_t + G_{t+1} A_t) \delta {x_t}  \nonumber \\
+ r_t' \frac{P_{t+1}}{2}r_t+ G_{t+1}r_t + q_{t+1} +s_t
\equiv \bar{\phi}_t(\bar{x}_t, \bar{u}_t) + H_t(\delta x_t, \delta u_t).
\end{multline}
\begin{flalign*}
\text{Now,~}
\mathop{min}_{{u_t}} Q_t({x_t}, {u_t}) &=  \mathop{min}_{{\bar{u}_t}}  \bar{\phi}_t({\bar{x}_t}, { \bar{u}_t}) + \mathop{min}_{\delta {u_t}} H_t(\delta {x_t} ,\delta {u_t}).
\end{flalign*}
\textbf{First order optimality:} Along the optimal nominal control sequence \(\bar{u}_t\), it follows from the minimum principle that
\begin{align*}
\frac{\partial c_t({x_t}, {u_t})}{\partial { u_t}} + \frac{\partial g({x_t}) }{\partial { u_t}}' \frac{\partial {\phi}_{t+1}({ x_{t+1}})}{\partial {x_{t+1}}}  = 0   
\end{align*}
\begin{equation}
\Rightarrow R_t {\bar{u}_t} + B_{t}' G_{t+1}' = 0.
\end{equation}
By setting \(\frac{\partial H_t(\delta {x_t}, \delta {u_t})}{\partial \delta {u_t}} = 0  \), we get:
\begin{align*}
 \delta {u^{*}_t} &=- (B_t'P_{t+1}B_t+R_t+B_t'P_{t+1} (\tilde R_{t,xu} \otimes \delta {x_t}))^{-1} (R_t  \bar{u}_t \nonumber\\
 &+ B_t' G_{t+1}'+(B_t'P_{t+1}A_t+(G_{t+1} \otimes \tilde R_{t,xu})')\delta {x_t} \nonumber\\
 &+B_t'P_{t+1}r_t+(r_t'P_{t+1}+\delta {x_t}'A_t'P_{t+1}) (\tilde R_{t,xu}\otimes\delta {x_t})).\nonumber
\end{align*}

\noindent By neglecting the $\Delta t^2$ terms,
\begin{align*}
 \delta {u^{*}_t} &= \underbrace{-R_t^{-1} (B_t'P_{t+1}A_t+(G_{t+1} \otimes \tilde R_{t,xu})')}_{K_t}\delta {x_t} \nonumber \\ &\underbrace{-R_t^{-1}(B_t'P_{t+1}r_t
 +(r_t'P_{t+1}+\delta {x_t}'A_t'P_{t+1}) (\tilde R_{t,xu}\otimes\delta {x_t}))}_{p_t} \nonumber \\
 \end{align*}
 \begin{align*}
    \Rightarrow{\delta {u_t^*}= K_t \delta {x_t}+p_t},
 \end{align*}
 where $p_t$ is the second and higher order terms w.r.t. $\delta x_t$. 
Substituting it in the expansion of $\phi_t$ and regrouping the terms based on the order of \(\delta x_t \) (till $2^{nd}$ order), we obtain:
\begin{align*}
{\phi}_t({x_t}) &= \bar{\phi}_t({\bar{x}_t}) + (L^x_t + (R_t {\bar{u}_t} + B_t' G_{t+1}')K_t + G_{t+1}A_t)\delta {x_t}  \nonumber \\
&+ \frac{1}{2}\delta {x_t}' (L^{xx}_{t} + A_t' P_{t+1} A_t - K_t' R_t K_t+G_{t+1} \otimes \tilde R_{t,xx}) \delta {x_t }.
\end{align*}
Expanding the LHS about the optimal nominal state result in the recursive equations in Proposition \ref{T-PFC}.
\end{proof}

\section{Decoupled Data based Control (D2C)}
\label{sec4}
This section presents our decoupled data-based control (D2C) algorithm. We detail the open-loop trajectory design using iLQR, the data-based extension to iLQR and the data-based closed-loop design components of D2C below.
\subsection{Open-Loop Trajectory Design via ILQR}
\label{open_loop_traj_design}
We present an iLQR \cite{ilqg2} based method to solve the open-loop optimization problem. The iLQR iteration which is identical to the Sequential Quadratic Programming (SQP) iteration is shown in Lemma \ref{lm_ilqrsqp} and the algorithm is outlined in Algorithms \ref{model_free_DDP_OL}, \ref{model_free_DDP_OL_FP} and \ref{model_free_DDP_OL_BP}. As for the optimality and convergence of iLQR, We prove that iLQR is guaranteed to converge to the global minimum of the optimal control problem under mild assumptions as shown in Theorem \ref{thm_ilqrglobal} and \ref{thm_ilqrglobalmin}.

\begin{lemma}
\label{lm_ilqrsqp}
Assuming the cost function to minimize is quadratic in control, i.e.:
\begin{align}
\label{nlp}
&\min_{\mathbf{u}} J(\mathbf{x}, \mathbf{u}) = \sum^{T-1}_{t=0}(l_t(x_t)+\frac{1}{2}u_t' Ru_t)+c_T(x_T), \\
&s.t.~~~~x_{t+1} = f(x_t, u_t),~t = 0, 1, \ldots, T-1, \nonumber
\end{align}
where $l_t(\cdot)$ and $c_T(\cdot)$ are the incremental and terminal state cost functions, $\mathbf{x}$ and $\mathbf{u}$ are the state and control trajectories. The iLQR iteration for the above problem is identical to the Sequential Quadratic Programming (SQP) iteration.
\end{lemma}
\begin{proof}
First, we derive the SQP solution. The Lagrangian function of the NLP problem in Eq.~\ref{nlp} can be written as:\\
\begin{align}
\label{eq:true_lgrg}
    \mathcal{L}(\mathbf{x}, \mathbf{u}, \mathbf{\lambda})=J(\mathbf{x}, \mathbf{u})+\sum^{T-1}_{t=0} \lambda_{t+1}' (x_{t+1} - f(x_t, u_t)).
\end{align}
Expanding the cost to second order and the constraint to first order yields the QP problem:
\begin{align}
\label{costest}
    \min_{\delta \mathbf{u}} \delta J(\delta \mathbf{x}, \delta \mathbf{u}) &= \sum^{T-1}_{t=0}[l_{t,x}'\delta x_t+\frac{1}{2}\delta x_t' l_{t,xx}\delta x_t+\bar{u}_t' R \delta u_t \nonumber \\
    &+\frac{1}{2} \delta u_t' R \delta u_t]+c_{T,x}'\delta x_T+\frac{1}{2}\delta x_T' c_{T,xx}\delta x_T,\\
s.t.~~~~\delta x_{t+1} &= f_{x_t}\delta x_t + f_{u_t}\delta u_t,~t = 0, 1, \ldots, T-1, \nonumber
\end{align}
where $\bar{x}_t$ and $\bar{u}_t$ are the resulting state and control of the previous SQP iteration. $l_{t,x} = \frac{\partial l_t}{\partial x}|_{\bar{x}_t}$, $l_{t,xx} = \nabla^2_{xx} l_t|_{\bar{x}_t}$, $c_{T,x} = \frac{\partial c_T}{\partial x}|_{\bar{x}_T}$ and $c_{T,xx} = \nabla^2_{xx} c_T|_{\bar{x}_T}$. The equality constraints are the linearized dynamics where $f_{x_t} = \frac{\partial f}{\partial x}|_{\bar{x}_t}$ and $f_{u_t} = \frac{\partial f}{\partial u}|_{\bar{u}_t}$. Then the Lagrangian function can be estimated with:
\begin{align}
\label{eq:lgrg}
    \mathcal{L}(\delta \mathbf{x}, \delta \mathbf{u}, \mathbf{\lambda}) &= \delta J(\delta \mathbf{x}, \delta \mathbf{u}) + \sum^{T-1}_{t=0}\lambda_{t+1}'(\delta x_{t+1} - f_{x_{t}}\delta x_{t} \nonumber \\
    &- f_{u_{t}}\delta u_{t}).
\end{align}
By satisfying the first order necessary conditions w.r.t. $x_t$, $u_t$ and $\lambda_t$, we have,
\begin{subequations}
\begin{align}
    \lambda_t &=-l_{t,x}-l_{t,xx}\delta x_t+f_{x_t}'\lambda_{t+1} \label{eq:n1}\\
    \delta u_t &=R^{-1}f_{u_t}'\lambda_{t+1}-\bar{u}_t \label{eq:n2}\\
    \delta x_t &= f_{x_{t-1}}\delta x_{t-1}- f_{u_{t-1}}\delta u_{t-1},\label{eq:n3}
\end{align}
\end{subequations}
with boundary condition, $\lambda_T=-c_{T,x}-c_{T,xx}\delta x_T$. Let us now show that the Lagrange multipliers have the form $\lambda_t=-v_t-V_t\delta x_t$ for all $t$. This is trivially true at the terminal timestep where $v_T = c_{T,x}$ and $V_T = c_{T,xx}$. Suppose that $\lambda_{t+1}=-v_{t+1}-V_{t+1}\delta x_t$ for timestep $t+1$, we start with Eq.~\ref{eq:n2} and \ref{eq:n3} at time $ t + 1$ and substitute for $\lambda_{t+1}$:
\begin{align}
    \delta x_{t+1} &= f_{x_t}\delta x_t+ f_{u_t}(R^{-1}f_{u_t}'\lambda_{t+1}-\bar{u}_t) \nonumber \\
    &=f_{x_t}\delta x_t + f_{u_t}(R^{-1}f_{u_t}'(-v_{t+1}-V_{t+1}\delta x_{t+1})-\bar{u}_t).
\end{align}
By collecting terms and solving for $\delta x_{t+1}$ yield:
\begin{align}
    \delta x_{t+1} &= (I+f_{u_t}R^{-1}f_{u_t}'V_{t+1})^{-1}(f_{x_t}\delta x_t \nonumber \\ 
    &-f_{u_t}R^{-1}f_{u_t}'v_{t+1}-f_{u_t}\bar{u}_t).
    \label{eq:x_c}
\end{align}
Substituting $\lambda_{t+1}$ in Eq.~\ref{eq:n1} with the propagation form and $\delta x_{t+1}$ with Eq.~\ref{eq:x_c}:
\begin{align}
    \lambda_t&=-l_{t,x}-l_{t,xx}\delta x_t \nonumber \\
    &+f_{x_t}'(-v_{t+1}-V_{t+1}((I+f_{u_t}R^{-1}f_{u_t}'V_{t+1})^{-1}\nonumber \\
    &\cdot (f_{x_t}\delta x_t-f_{u_t}R^{-1}f_{u_t}'v_{t+1}-f_{u_t}\bar{u}_t))).
    \label{eq:lbd}
\end{align}
After separating zero and first order terms w.r.t. $\delta x_t$, we can show that at timestep $t$, $\lambda_t=-v_t-V_t\delta x_t$, where $v_t$ and $V_t$ can be solved as:
\begin{subequations}
\begin{align}
    v_t &= l_{t,x}+f_{x_t}'v_{t+1}-f_{x_t}'V_{t+1}f_{u_t}(R+f_{u_t}'V_{t+1}f_{u_t})^{-1}\nonumber \\
    &\cdot(f_{u_t}'v_{t+1}+R\bar{u}_t) \\
    V_t &= l_{t,xx}+f_{x_t}'(V_{t+1}^{-1}+f_{u_t}R^{-1}f_{u_t}')^{-1}f_{x_t}\nonumber \\
    &=l_{t,xx}+f_{x_t}'V_{t+1}f_{x_t}-f_{x_t}'V_{t+1}f_{u_t}(R+f_{u_t}'V_{t+1}f_{u_t})^{-1} \label{eq:Vt} \nonumber \\
    &\cdot f_{u_t}'V_{t+1}f_{x_t}.
\end{align}
\end{subequations}
Therefore the Lagrangian multiplyer has the form $\lambda_t=-v_t-V_t\delta x_t$ for all $t$.
Then, by substituting Eq.~\ref{eq:lbd} in Eq.~\ref{eq:n2}, we can solve for $\delta u_t$:
\begin{align}
    \delta u_t
    &=R^{-1}f_{u_t}'(-v_{t+1}-V_{t+1}(f_{x_t}\delta x_t + f_{u_t}\delta u_t))-\bar{u}_t \nonumber \\
    &=-(R+f_{u_t}'V_{t+1}f_{u_t})^{-1}(R\bar{u}_t+f_{u_t}'v_{t+1}\nonumber \\
    &+f_{u_t}'V_{t+1}f_{x_t}\delta x_t).
\end{align}
which can be written in the linear feedback form $\delta u_t =-k_t-K_t\delta x_t$, where $k_t=(R+f_{u_t}'V_{t+1}f_{u_t})^{-1}(R\bar{u}_t+f_{u_t}'v_{t+1})$ and $K_t=(R+f_{u_t}'V_{t+1}f_{u_t})^{-1}f_{u_t}'V_{t+1}f_{x_t}$.\\
Comparing the above results with iLQR \cite{ILQG_tassa2012synthesis}, it is clear that the control update $\delta u_t$ in SQP and iLQR are the same.
In SQP, the update at time $t$ is $(\delta x_t, \delta u_t)$, in which $\delta x_t$ can be recursively solved from Eq.~\ref{eq:n3}, with $\delta x_0=0$ and $\delta u_t =-k_t-K_t\delta x_t$. In the forward pass of iLQR, the state is updated as $\delta x_{t+1} = f(\bar{x}_t+\delta x_t, \bar{u}_t+\delta u_t) - \bar{x}_{t+1}$. Thus the iLQR iterations are always feasible. Suppose the updates are small such that the linearization of the dynamics is valid,
\begin{align}
    \bar{x}_{t+1}+\delta x_{t+1}&=f(\bar{x}_t, \bar{u}_t)+f_{x_t}\delta x_t+f_{u_t}\delta u_t \nonumber \\
    \delta x_{t+1}&=f_{x_t}\delta x_t+f_{u_t}\delta u_t,
    \label{eq:ln_dyn}
\end{align}
which is the same as the SQP iteration in Eq.~\ref{eq:n3}. 
Therefore, it follows that the iLQR iterations are identical to the SQP iterations for the above optimal control problem.
\end{proof}

\noindent To show the convergence of the iLQR algorithm, we list the key assumptions that are needed in the following proof.
\textbf{A1:} The control cost $R$ is chosen to be positive definite for all timesteps, i.e., there exists a constant $\beta_1>0$ such that $\mathbf{d}'R\mathbf{d} \geq \beta_1 \|\mathbf{d}\|^2$ for any $\mathbf{d}$.\\
\textbf{A2:} The cost functions $l_t(\cdot)$ and $c_T(\cdot)$ are chosen such that $\{l_{t,xx}\}$ and $\{c_{T,xx}\}$ are uniformly bounded and positive semi-definite for all t, i.e., there exists a constant $\beta_2 > 0$ such that for each $k$, $\|l_{t,xx}\| \leq \beta_2$, $\|c_{T,xx}\| \leq \beta_2$, $\mathbf{d}'l_{t,xx}\mathbf{d} \geq 0$ and $\mathbf{d}'c_{T,xx}\mathbf{d} \geq 0$ for all $t$ and any $\mathbf{d}$.\\
\textbf{A3:} The starting point and all succeeding iterates lie in some compact set $\mathcal{C}$.\\


\begin{lemma}
\label{lm_descentdir}
Under assumptions \textbf{A1}, \textbf{A2} and \textbf{A3}, if $(\mathbf{x}, \mathbf{u})$ is not a stationary point of the NLP problem in Eq.~\ref{nlp}, the iLQR iteration update $\mathbf{d_s} = [\mathbf{\delta x}'~~\mathbf{\delta u}']'$ is a descent direction for the cost function $J(\mathbf{x}, \mathbf{u})$.
\end{lemma}

\begin{proof}
Denote the constraint from the system dynamics and its gradient as,
\begin{align}
h(\bar{x}_{t+1},\bar{x}_t,\bar{u}_t)&=\bar{x}_{t+1}-f(\bar{x}_t,\bar{u}_t) \nonumber \\ 
\nabla h(\bar{x}_{t+1},\bar{x}_t,\bar{u}_t)&=[I~~- f_{x_t}~~-f_{u_t}].
\end{align}
By satisfying the first order necessary condition,
\begin{align}
    d'_{s,t} \nabla J'_t&=-d'_{s,t} \nabla h'(\bar{x}_{t+1},\bar{x}_t,\bar{u}_t) \lambda_{t+1}-d'_{s,t} B_t d_{s,t},
\end{align}
where $B_t$ is the Hessian of the cost function and $d_{s,t}=[\delta x_{t+1}'~~\delta x_t'~~\delta u_t']'$. The first term on the RHS is zero from Eq.~\ref{eq:ln_dyn}. Using \textbf{A2}, 
\begin{align}
    d'_{s,t} B_t d_{s,t} &= [\delta x'_{t+1}~~\delta x'_t~~\delta u'_t] \begin{bmatrix} l_{t+1,xx}&0&0\\0&l_{t,xx}&0\\0&0&R\end{bmatrix} \begin{bmatrix}\delta x_{t+1}\\ \delta x_t\\ \delta u_t\end{bmatrix} \nonumber \\
    &=\delta x'_{t+1} l_{t+1,xx}\delta x_{t+1}+\delta x'_t l_{t,xx}\delta x_t+\delta u'_t R\delta u_t \nonumber \\
    & > 0,
\end{align}
as $\delta u_t$ can not be zero before reaching a stationary point and $R$ is chosen to be positive definite. Then, it follows that $d'_{s,t} \nabla J'_t < 0$ for all $t$, thus the iterations of iLQR always give a descent direction for the cost function.
\end{proof}

\begin{theorem}
\label{thm_ilqrglobal}
Under assumptions \textbf{A1}, \textbf{A2} and \textbf{A3}, with the line search method in Algorithm \ref{model_free_DDP_OL} and \ref{model_free_DDP_OL_FP}, the iLQR algorithm started at any initial point is guaranteed to converge to a stationary point of the optimization problem in Eq.~\ref{nlp}.
\end{theorem}

\begin{proof}
According to the line search method given in Algorithm \ref{model_free_DDP_OL} and \ref{model_free_DDP_OL_FP}, as well as the fact that all the iterations are feasible, i.e., $h(x_{t+1},x_t,u_t)=0$ for all t, the chosen stepsize $\alpha$ satisfies,
\begin{align}
    \label{eq:linesearch}
    \frac{J(\mathbf{x}^{k+1}, \mathbf{u}^{k+1})-J(\mathbf{x}^k, \mathbf{u}^k)}{\Delta J(\alpha)} \geq \sigma_1 > 0,
\end{align}
where $[\mathbf{x}^{k+1'}, \mathbf{u}^{k+1'}]'=[\mathbf{x}^{k'}, \mathbf{u}^{k'}]'+\alpha \mathbf{d_s^k}$ and $\Delta J(\alpha)$ is defined as $\alpha \nabla J(\mathbf{x}^k, \mathbf{u}^k)' \mathbf{d_s^k}$. In the algorithms below, we write $\Delta J(\alpha)$ as $\Delta cost(\alpha)$. Note that due to our backtracking line search method, $\alpha$ is lower bounded away from zero for all iterations \cite{boggs_tolle_1995}. Substituting for $\Delta J(\alpha)$, Eq.~\ref{eq:linesearch} can be rewritten as, 
\begin{align}
    J(\mathbf{x}^{k+1}, \mathbf{u}^{k+1}) &\leq J(\mathbf{x}^k, \mathbf{u}^k) + \sigma_1 \alpha \nabla J(\mathbf{x}^k, \mathbf{u}^k)' \mathbf{d_s^k} \nonumber\\
    &\leq J(\mathbf{x}^k, \mathbf{u}^k) - \beta_3 |cos\theta| \|\nabla J(\mathbf{x}^k, \mathbf{u}^k)\| \|\mathbf{d_s^k}\|,
\end{align}
where $\beta_3$ is a positive constant, $\theta$ is the angle between $\nabla J(\mathbf{x}^k, \mathbf{u}^k)$ and $\mathbf{d_s^k}$. Since Lemma \ref{lm_descentdir} shows that $\mathbf{d_s^k}$ is always a descent direction, $|cos\theta|$ and $\|\mathbf{d_s^k}\|$ are bounded away from zero. As the cost function at each iteration is finite, we have,
\begin{align}
    \sum_{k=1}^{\infty} \beta_4 \|\nabla J(\mathbf{x}^k, \mathbf{u}^k)\| < \infty.
\end{align}
It follows that,
\begin{align}
    \lim_{k \to \infty} \nabla J(\mathbf{x}^k, \mathbf{u}^k) = 0.
\end{align}
Thus $(\mathbf{x}^k, \mathbf{u}^k)$ from the iLQR algorithm converges to the stationary point of the optimization problem in Eq.~\ref{nlp}, which completes the proof.
\end{proof}

\noindent Next, due to the Method of Characteristics development in \cite{arxivpaper1}, if the dynamics are affine in control and the cost is quadratic in control, it follows that satisfying the necessary conditions for optimality, which iLQR is guaranteed to do as shown above, one is assured that the stationary point is the global minimum of the problem.

\noindent Theorem \ref{thm_ilqrglobal} and the Method of Characterisrics result in paper \cite{arxivpaper1} lead to the following:
\begin{theorem}
\label{thm_ilqrglobalmin}
Under assumptions \textbf{A1}, \textbf{A2} and \textbf{A3}, with a cost function that is quadratic in control and a system whose dynamics are affine in control, the iLQR algorithm is guaranteed to converge to the global minimum of the optimization problem in Eq.~\ref{nlp} from any initial point.
\end{theorem}

\begin{algorithm}
  \caption{\strut Decoupled Data-based Control (D2C) Algorithm using ILQR}
  $\Rightarrow$ \textit{\textbf{Open-loop trajectory optimization}}\\
  {\bf Initialization:} Set state $x = x_0$, initial guess $u^0_{0:T-1}$, line search parameter $\alpha = 1$, regularization $\mu = 10^{-6}$, iteration counter $k = 0$, convergence coefficient $\epsilon = 0.001$, line search threshold $\sigma_1=0.3$.\\
  \While {$cost_{k}/cost_{k-1} < 1 - \epsilon$}{
  \CommentSty{/* backward pass */}\\
  \{$k^{k}_{0:T-1}, K^{k}_{0:T-1}$\} $=$ backward\_pass()\\
  \CommentSty{/* forward pass */}\\
  Initialize $z=1$.\\
      \While{$z < \sigma_1$}{
            Reduce $\alpha$, \\
            $u^{k+1}_{0:T-1}$, $cost_k$, $\Delta cost(\alpha)$ $=$ forward\_pass($u^{k}_{0:T-1},\{k^{k}_{0:T-1}, K^k_{0:T-1}\}$),\\
            $z=(cost_k-cost_{k-1})/\Delta cost(\alpha),$\\
        }
        $k \leftarrow k + 1.$
      }{
      $\bar{u}_{0:T-1} \gets u^{k+1}_{0:T-1}$\\
  }
  $\Rightarrow$ \textit{\textbf{The closed-loop feedback design}}\\
  1. $A_{t}, B_{t} \gets LLS-CD(\bar{u}_{0:T-1},\bar{x}_{0:T-1})$\\
  2. Calculate feedback gain $K_{0:T-1}$ from Eq.~\ref{feedbackgain}.\\
  3. Full closed-loop control policy: $u^*_t = \bar{u}_t + K_t \delta x_t$,\\where $\delta x_t$ is the state deviation from the nominal trajectory.
  \label{model_free_DDP_OL}
\end{algorithm}

\begin{algorithm}
 \caption{\strut Forward Pass}
  {\bf Input:} Previous iteration nominal trajectory - $u^{k}_{0:T-1}$, iLQR gains - $\{k_{0:T-1}, K_{0:T-1}\}$.\\
  Start from $t = 0$, $cost = 0$, $\Delta cost(\alpha)=0$, $\bar{x}_0 = x_0$.\\
  \While {$t < T$}{
    \begin{equation*}
    \begin{split}
        u^{k+1}_{t} &= u_t^{k} + \alpha k_t + K_{t} ({x^{k+1}_t} - {x_t^{k}}),\\
        {x^{k+1}_{t+1}} &= simulate\_forward\_step(x^{k+1}_{t}, {u_t^{k+1}}),\\
        cost &= cost + incremental\_cost({x^{k+1}_{t}}, {u_t^{k+1}}),\\
        \end{split}
    \end{equation*}
    $t \leftarrow t + 1$. 
    }
  $cost = cost + terminal\_cost({x^{k+1}_T}),$\\ \vspace{2pt}
  $\Delta cost(\alpha) = -\alpha \sum_{t=0}^{T-1} k_t' Q_{u_t}-\frac{\alpha^2}{2}\sum_{t=0}^{T-1}k_t'Q_{u_tu_t}k_t.$\\
  {\bf return} $u^{k+1}_{0:T-1}$, $cost$, $\Delta cost(\alpha)$
  \label{model_free_DDP_OL_FP}
\end{algorithm}
\begin{algorithm} 
  \caption{\strut Backward Pass}
  \CommentSty{/* start from the terminal time */}\\
  $t \gets T - 1$.\ \\
  Compute $J_{\bf x_T}$ and $J_{\bf x_T x_T}$ using boundary conditions.\\
  \While {$t >= 0$}{
  \CommentSty{/* Estimate the Jacobians using LLS-CD as shown in Section \ref{sec4}-1): */}\\
  $f_{x_t}, f_{u_t} \gets LLS-CD({ \bar{x}_t},{\bar{u}_t}).$\\
  \CommentSty{/* Obtain the partials of the Q function as follows: */}
    \begin{equation*}
    \begin{split}
    Q_{x_t} &= c_{x_t}  + f_{x_t}' J_{x_{t+1}},\\
    Q_{u_t} &= c_{u_t} + f_{u_t}' J_{x_{t+1}},\\
    Q_{x_t x_t} &= c_{x_t x_t} + f_{x_t}' J_{x_{t+1} x_{t+1}} f_{x_t}, \\
    Q_{u_t x_t} &= c_{u_t x_t} + f_{u_t}' (J_{x_{t+1} x_{t+1}} + \mu I_{n_x \times n_x}) f_{x_t}, \\
    Q_{u_t u_t} &= c_{u_t u_t} + f_{u_t}' (J_{x_{t+1} x_{t+1}}+ \mu I_{n_x \times n_x}) f_{u_t}.\\
    \end{split}
\end{equation*}
  \eIf{$Q_{u_t u_t}$ {\textnormal{is positive-definite}}}{
    \begin{equation*}
    \begin{split}
        k_t &= -Q_{u_t u_t}^{-1} Q_{u_t},\\
        K_t &= -Q_{u_t u_t}^{-1} Q_{u_t x_t}.
    \end{split}
    \end{equation*}
    Decrease~$\mu$.
    }
    {
    Increase $\mu$.\\
    Restart backward pass for current time-step.
    }
  \CommentSty{/* Obtain the partials of the value function $J_t$ as follows: */}
  \begin{equation*}
      \begin{split}
          J_{x_t} &= Q_{x_t} + K_{t}' Q_{u_t u_t} k_t + K_t' Q_{u_t} + Q_{u_t x_t}' k_t,\\
          J_{x_t x_t} &= Q_{x_t x_t} + K_t' Q_{u_t u_t} K_t + K_t' Q_{u_t x_t} + Q_{u_t x_t}' K_t.
      \end{split}
  \end{equation*}
  $t \leftarrow t - 1$ 
  }
  {\bf return} $\{k_{0:T-1}, K_{0:T-1}\}$.
  \label{model_free_DDP_OL_BP}
\end{algorithm}

\subsection{Data-based Extension to ILQR}
\label{sys_id_solve}
ILQR typically requires the availability of analytical system Jacobian, and thus, cannot be directly applied when such analytical gradient information is unavailable (much like Nonlinear Programming software whose efficiency depends on the availability of analytical gradients and Hessians).  In order to make it an (analytical) model-free algorithm, it is sufficient to obtain estimates of the system Jacobians from simulation data, and a sample-efficient randomized way of doing so is described in the following. 

Using Taylor expansion of the non-linear dynamics model in Section \ref{sec2} in the deterministic setting about the nominal trajectory $(\bar{x}_t,  \bar{u}_t)$ on both the positive and the negative sides, we obtain the following central difference equation:
$
    F(\bar{x}_t + \delta x_t, \bar{u}_t + \delta u_t) - F(\bar{x}_t - \delta x_t, \bar{u}_t - \delta  u_t) \\ = 2 \begin{bmatrix} F_{x_t} & F_{u_t} \end{bmatrix} \begin{bmatrix}  \delta {x_t} \\ \delta {u_t} \end{bmatrix} + O(\| \delta {x_t}\|^3 + \| \delta {u_t}\|^3).
$
Multiplying by $\begin{bmatrix} \delta {x_t}' & \delta {u_t}' \end{bmatrix}$ on both sides to the above equation and apply standard Least Square method:
\begin{equation}
    \begin{split}
    &\begin{bmatrix} F_{x_t}~F_{u_t} \end{bmatrix} = M \delta Y_t' (\delta Y_t\delta Y_t')^{-1} \\
    &M = \\
    &\begin{bmatrix} F({\bar{x}_t} + \delta {x_t^{(1)}}, {\bar{u}_t} + \delta {u_t^{(1)}}) - F({ \bar{x}_t} - \delta {x_t^{(1)}}, {\bar{u}_t} - \delta {u_t^{(1)}})\\ F({\bar{x}_t} + \delta {x_t^{(2)}}, {\bar{u}_t} + \delta {u_t^{(2)}}) - F({\bar{x}_t} - \delta {x_t^{(2)}}, {\bar{u}_t} - \delta {u_t^{(2)}}) \\ \vdots  \\F({\bar{x}_t} + \delta {x_t^{(n_s)}}, {\bar{u}_t} + \delta {u_t^{(n_s)}}) - F({\bar{x}_t} - \delta {x_t^{(n_s)}}, {\bar{u}_t} - \delta {u_t^{(n_s)}})\end{bmatrix} \nonumber
     \end{split}
\end{equation}
where `$n_s$' be the number of samples for each of the random variables, $\delta {x_t}$ and $\delta {u_t}$. Denote the random samples as $\delta {X_t} = \begin{bmatrix} \delta {x_t^{(1)}}& \delta {x_t^{(2)}}& \ldots &\delta {x_t^{(n_s)}}\end{bmatrix}$, $\delta {U_t} = \begin{bmatrix} \delta {u_t^{(1)}} &\delta {u_t^{(2)}}& \ldots& \delta {u_t^{(n_s)}}\end{bmatrix}$ and $\delta Y_t = \begin{bmatrix} \delta X_t & \delta U_t \end{bmatrix}$.

We are free to choose the distribution of $\delta {x_t}$ and $\delta {u_t}$. We assume both are i.i.d. Gaussian distributed random variables with zero mean and a standard deviation of $\sigma$.~This ensures that $\delta Y_t \delta Y_t'$ is invertible.

Let us consider the terms in the matrix $\delta Y_t \delta Y_t'=\begin{bmatrix}  \delta {X_t} \delta {X_t}' & \delta {X_t} \delta {U_t}' \\ \delta {U_t} \delta {X_t}'  & \delta {U_t} \delta {U_t}' \end{bmatrix}$.~$\delta {X_t} \delta {X_t}' = \sum_{i=1}^{n_s} \delta {x_t}^{(i)} {\delta {x_t}^{(i)}}'$. Similarly, $\delta {U_t} \delta {U_t}' = \sum_{i=1}^{n_s} \delta {u_t}^{(i)} {\delta {u_t}^{(i)}}'$, $\delta {U_t} \delta {X_t}' = \sum_{i=1}^{n_s} \delta {u_t}^{(i)} {\delta {x_t}^{(i)}}'$ and $\delta {X_t} \delta {U_t}' = \sum_{i=1}^{n_s} \delta {x_t}^{(i)} {\delta {u_t}^{(i)}}'$. From the definition of sample variance, for a large enough $n_s$, we can write the above matrix as 
\begin{equation*}
\begin{split}
    \delta Y_t \delta Y_t' &= \begin{bmatrix} \sum_{i=1}^{n_s} \delta {x_t}^{(i)} {\delta {x_t}^{(i)}}' & \sum_{i=1}^{n_s} \delta {x_t}^{(i)} {\delta {u_t}^{(i)}}' \\ \sum_{i=1}^{n_s} \delta {u_t}^{(i)} {\delta {x_t}^{(i)}}' & \sum_{i=1}^{n_s} \delta {u_t}^{(i)} {\delta {u_t}^{(i)}}'
    \end{bmatrix}\\ &\approx \begin{bmatrix} \sigma^2(n_s - 1) {\text I_{n_x}} & {\text 0_{n_x \times n_u}} \\ 0_{n_u \times n_x} & \sigma^2 (n_s - 1) {\text I_{n_u}}\end{bmatrix} \\
    &= \sigma^2 (n_s - 1){\text I}_{(n_x+n_u) \times (n_x+n_u)}
\end{split}
\end{equation*}
Typically for $n_s \sim O(n_x + n_u)$, the above approximation holds good. The reason is as follows. Note that the above least square procedure converges when the matrix $\delta Y_t \delta Y_t'$ converges to the identity matrix. This is equivalent to estimation of the covariance of the random vector $\delta Y_t = [\delta X_t \ \delta U_t]$ where $\delta X_t$, and $\delta U_t$ are Gaussian i.i.d. samples. Thus, it follows that the number of samples is $O(n_x+n_u)$, given $n_x+n_u$ is large enough (see \cite{HDP-book}). 

This has important ramifications since the overwhelming bulk of the computations in the D2C iLQR implementation consists of the estimation of these system dynamics. Moreover, these calculations are highly parallelizable.
Henceforth, we will refer to this method as `Linear Least Squares by Central Difference (LLS-CD)'. 

\subsection{Data-based Closed-Loop Design}
\label{lqr_design}
The iLQR design in the open-loop part also furnishes a linear feedback law, however, this is not the linear feedback corresponding to the optimal feedback law. In order to accomplish this, we need to use the feedback gain equations (\ref{feedback}). This can be done in a data-based fashion analogous to the LLS-CD procedure above as shown in the Appendix Section \ref{hession_estimation}, but in practice, the converged iLQR feedback gain offers very comparable performance to the optimal feedback gain. The entire algorithm is summarized together in Algorithm \ref{model_free_DDP_OL}. The `forward pass' and `backward pass' algorithms are summarized in Algorithms \ref{model_free_DDP_OL_FP} and \ref{model_free_DDP_OL_BP} respectively.

\section{Empirical Results}
\label{sec5}
This section reports the result of training and performance of D2C on several benchmark examples and its comparison to Deep Deterministic Policy Gradient(DDPG) \cite{plappert2016kerasrl}, Twin-Delayed DDPG(TD3)\cite{td3} and Soft Actor-Critic(SAC)\cite{sac}. DDPG was regarded as an efficient global deep RL method. TD3 and SAC introduced further improvements and outperformed DDPG on many benchmark examples. These three are the current state-of-the-art RL methods, thus are chosen to compare with D2C. The physical models of the system are deployed in the simulation platform `MuJoCo-2.0' \cite{mujoco} as a surrogate to their analytical models. The models are imported from the OpenAI gym \cite{openai} and Deepmind's control suite \cite{DeepRL_bench}. 
In addition, to further illustrate scalability, we test the D2C algorithm on a Material Microstructure Control problem (state dimension of 400) which is governed by a Partial Differential Equation (PDE) called the Allen-Cahn Equation. Please see the supplementary document for more details about the results as well as more experiments.
All simulations are done on a machine with the following specifications: AMD Ryzen 3700X 8-Core CPU@3.59 GHz, with a 16 GB RAM, with no multi-threading.\\

We test the algorithms on four fronts that allow us to test the speed and reliability of the learning, as well as the performance of the learned controllers: 
\begin{enumerate}
\item \textit{Training efficiency}, where we study the time required for training, 
\item \textit{Reliability of the training}, studied using the variance of the resulting answers, 
\item \textit{Robustness of the learned controllers} to differing levels of noise, and hence, a test of the ``global nature" of the synthesized feedback law, and
\item \textit{Learning in stochastic systems}, where we show the effects of a persistent process noise process on learning and performance of the techniques. 
\end{enumerate}
\subsection{Model Description}
\subsubsection{MuJoCo Models}
Here we provide details of the MuJoCo models used in our simulations.\\
\textit{Inverted pendulum} A swing-up task of this 2D system from its downright initial position is considered. \\
\textit{Cart-pole} 
The state of a 4D under-actuated cart-pole comprises the angle of the pole, the cart's horizontal position and their rates. Within a given horizon, the task is to swing up the pole and balance it at the middle of the rail by applying a horizontal force on the cart.\\
\textit{3-link Swimmer} 
The 3-link swimmer model has 5 degrees of freedom and together with their rates, the system is described by 10 state variables. The task is to solve the planning and control problem from a given initial state to the goal position located at the center of the ball. Controls can only be applied in the form of torques to two joints. Hence, it is under-actuated by 3 DOF. \\
\textit{6-link Swimmer} 
The task with a 6-link swimmer model is similar to that defined in the 3-link case. However, with 6 links, it has 8 degrees of freedom and hence, 16 state variables, controlled by 5 joint motors. \\
\textit{Fish} 
The fish model moves in 3D space, the torso is a rigid body with 6 DOF. The system is described by 26 dimensions of states and 6 control channels. Controls are applied in the form of torques to the joints that connect the fins and tails with the torso. The rotation of the torso is described using quaternions. 

\subsubsection{Material Model}
The Material Microstructure is modeled as a 2D grid with periodic boundary, which satisfies the Allen-Cahn equation \cite{ALLEN19791085} at all times. The Allen-Cahn equation is a classical governing partial differential equation (PDE) for phase field models. It has a general form of  
\begin{align}
    &\frac{\partial \phi}{\partial t}= -M(\frac{\partial F}{\partial\phi}-\gamma\nabla^{2}\phi) \label{eq:Dyn_main}
\end{align}    
where $\phi=\phi(x,t)$ is called the ‘order parameter’, which is a spatially varying, non-conserved quantity, and $\nabla^2\phi = \frac{\partial^2 \phi}{\partial x^2} + \frac{\partial^2 \phi}{\partial y^2}$, denotes the Laplacian of a function, and causes a `diffusion' of the phase between neighbouring points. In Controls parlance, $\phi$ is the state of the system, and is infinite dimensional, i.e., a spatio-temporally varying function. It reflects the component proportion of each phase of material system. 
In this study, we adopt the following general form for the energy density function \textit{F}: 
\begin{align}
    &F(\phi;T,h) = \phi^{4}+T\phi^{2}+h\phi \label{eq:D_well} 
\end{align}
Herein, we take both \textit{T}, the temperature, and \textit{h}, an external force field such as an electric field, to be available to control the behavior of the material. In other words, the material dynamics process is controlled from a given initial state to the desired final state by providing the values of \textit{T} and \textit{h}. The control variables $T$ and $h$ are, in general, spatially (over the material domain) as well as temporally varying.

The material model simulated consists of a 2-dimensional grid of dimension 20x20, i.e.,  400 states. The order parameter at each of the grid points can be varied in the range $[-1, 1]$. The model is solved numerically using an explicit, second-order, central-difference-based Finite Difference (FD) scheme. The number of control variables is a fourth of the observation space, i.e., $100$ each for both control inputs $T$ and $h$. Physically, it means that we can vary the $T$ and $h$ values over $2\times 2$ patches of the domain. Thus, the model has 400 state variables and 200 control channels. The control task is to influence the material dynamics to converge to a banded phase distribution as shown in Fig.~\ref{model_pics}(d).

The initial and the desired final state of the model are shown in Fig. \ref{model_pics}(c, d). The model starts at an initial configuration of all states at $\phi=-1$, i.e., the entire material is in one phase. The final state should converge to alternating bands of $\phi=0$ (red) and $\phi=1$ (blue), with each band containing 2 columns of grid points. Thus, this is a very high-dimensional example with a 400-dimensional state and 200 control variables.

\begin{remark}
We note that the methods have access to the same simulation models and no hidden advantage or extra information is provided to either algorithm. Note also that the models, other than the cart-pole and the pendulum examples, lack an analytical description (analytically intractable) and are computational models. Thus, data based methods such as DDPG and D2C can be construed as control synthesis techniques for such analytically intractable models.
\end{remark}
%
\begin{figure*}[!htb]
\centering
\begin{multicols}{3}
    \subfloat[Cartpole]{\includegraphics[width=\linewidth, height=3.3cm]{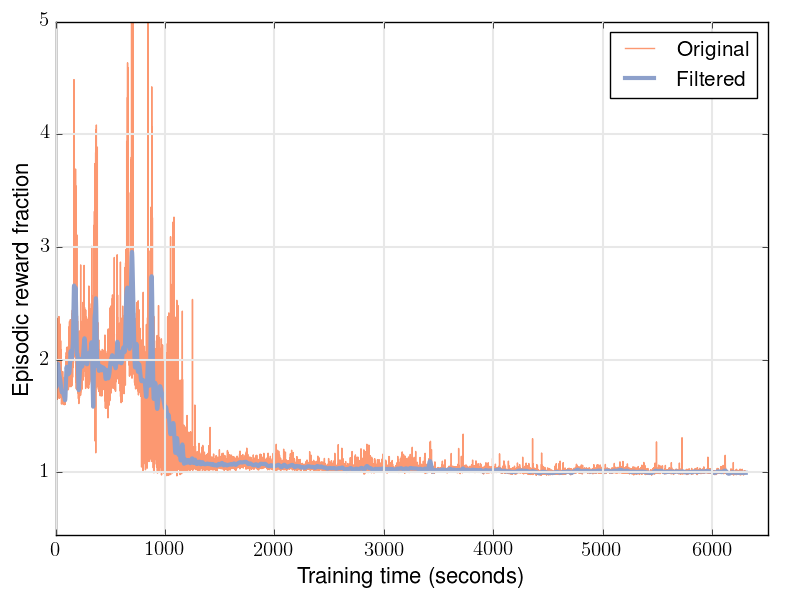}}
    \subfloat[6-link swimmer]{\includegraphics[width=\linewidth, height=3.3cm]{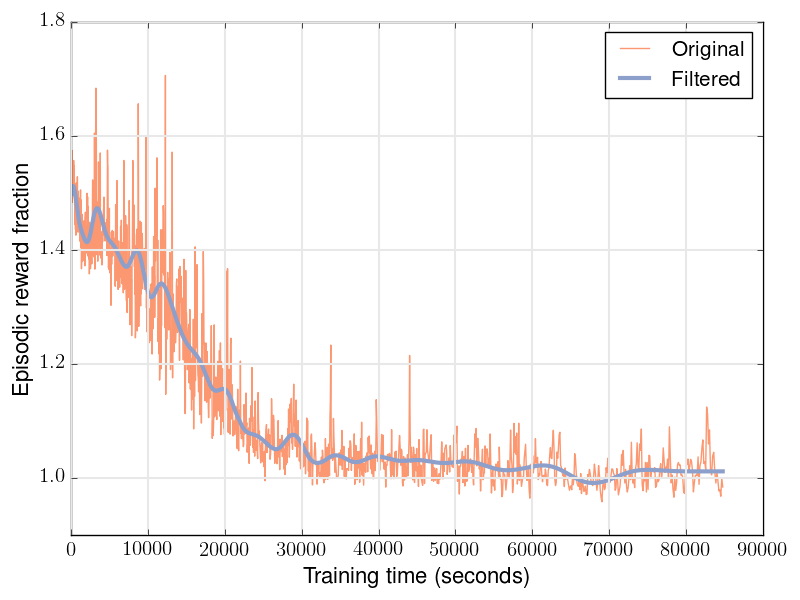}}
    \subfloat[Fish Robot]{\includegraphics[width=1.05\linewidth, height=3.3cm]{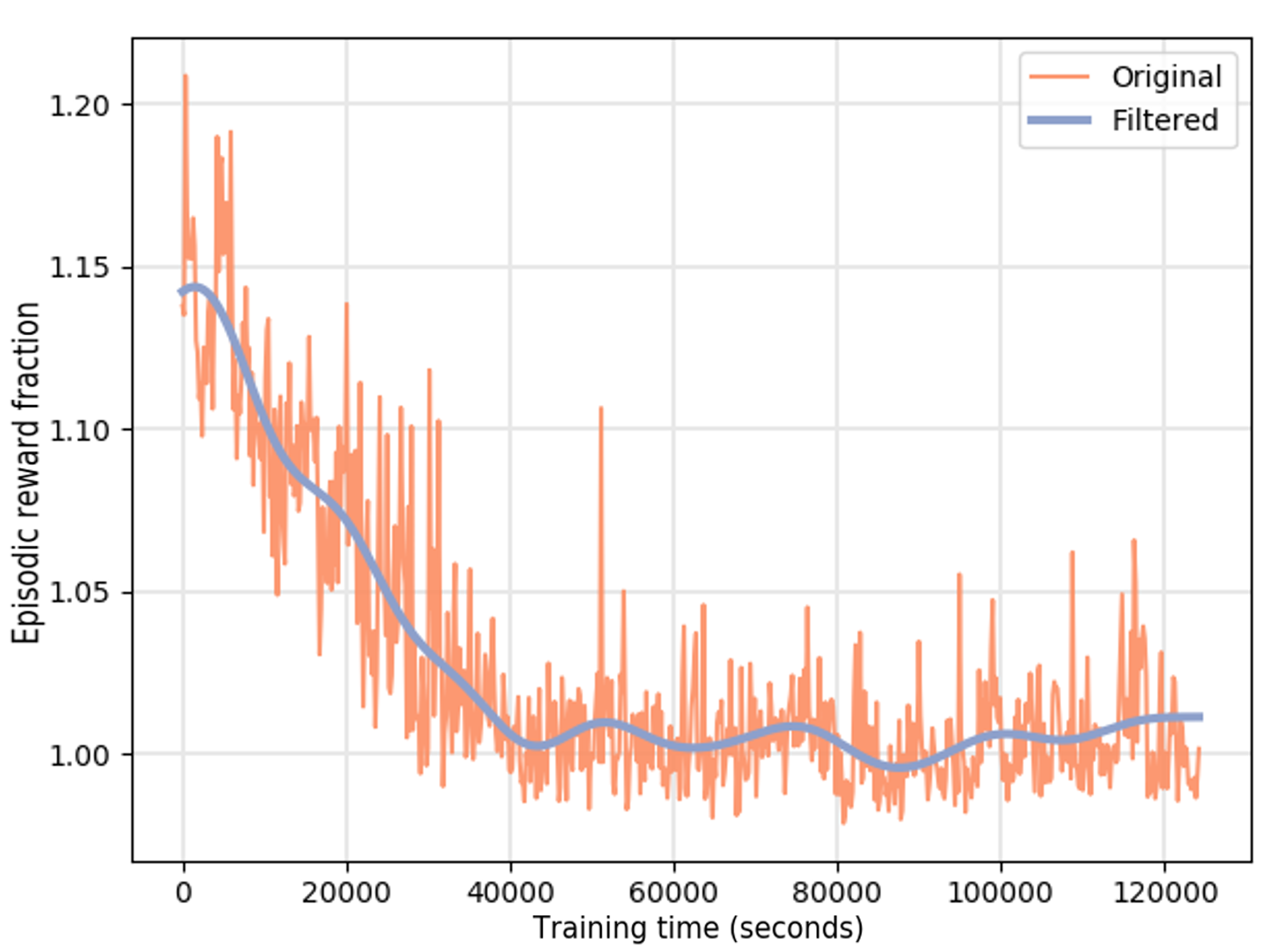}}
\end{multicols}

\begin{multicols}{3}
    \subfloat[Cartpole]{\includegraphics[width=1.05\linewidth, height=3.3cm]{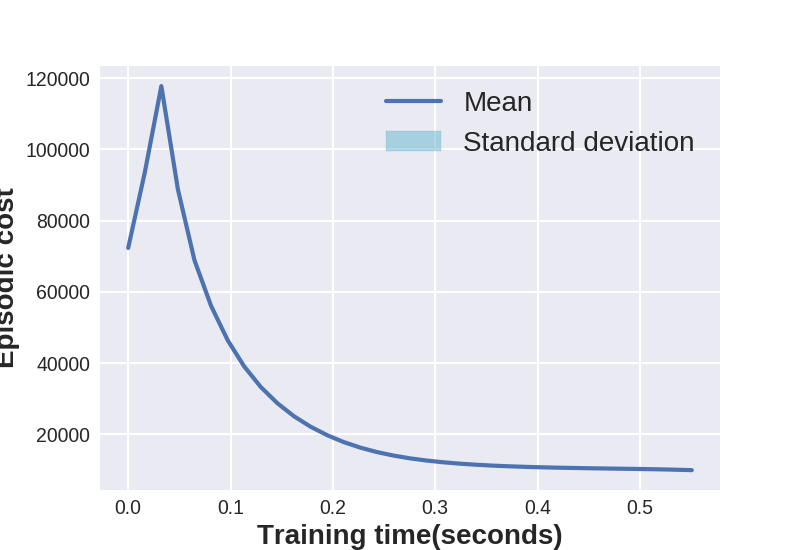}}
    \subfloat[6-link swimmer]{\includegraphics[width=1.05\linewidth, height=3.3cm]{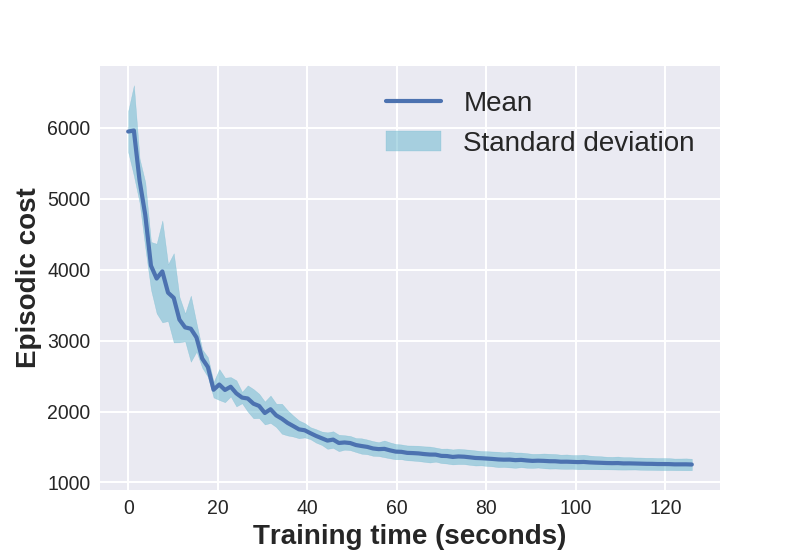}}
    \subfloat[Fish Robot]{\includegraphics[width=1.05\linewidth, height=3.3cm]{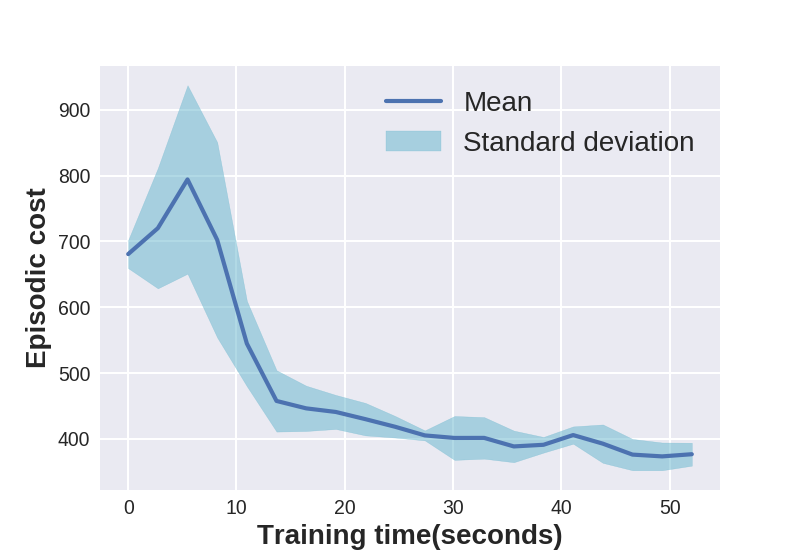}}
\end{multicols}

\begin{multicols}{3}
    \subfloat[Cartpole]{\includegraphics[width=1.\linewidth, height=3.3cm]{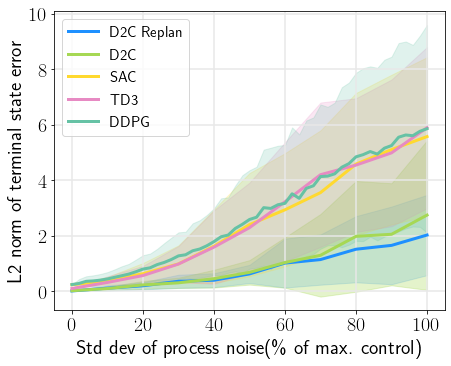}}
    \subfloat[6-link swimmer]{\includegraphics[width=1.05\linewidth, height=3.3cm]{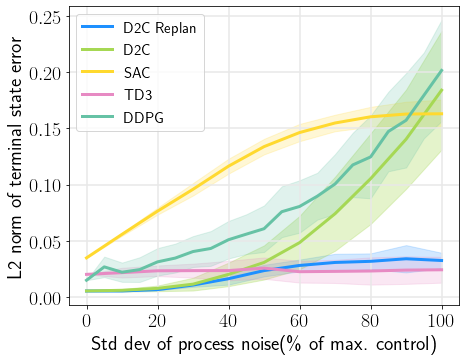}}
    \subfloat[Fish Robot]{\includegraphics[width=1.05\linewidth, height=3.3cm]{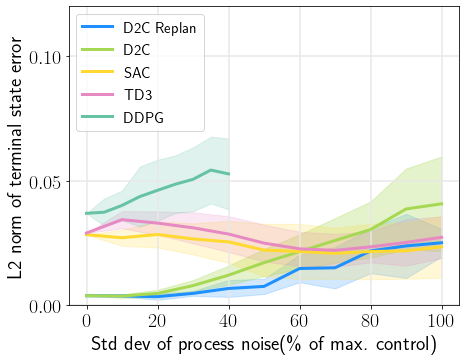}}
\end{multicols}
\caption{\small Top row: Convergence of Episodic cost in DDPG. Middle row: Convergence of Episodic cost D2C. Bottom row: L2-norm of terminal state error during testing in D2C vs RL methods. The solid line in the plots indicates the mean and the shade indicates the standard deviation of the corresponding metric.}
\label{d2c_2_training_testing}
\end{figure*}

\begin{figure}[!htb]
\centering
\subfloat[Material Microstructure]{\includegraphics[width=0.8\linewidth]{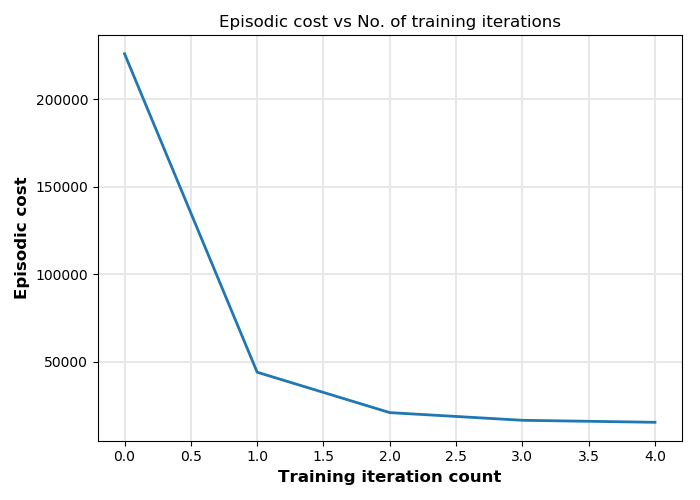}}

\begin{multicols}{4}
\subfloat[Initial]{\includegraphics[width=1.05\linewidth]{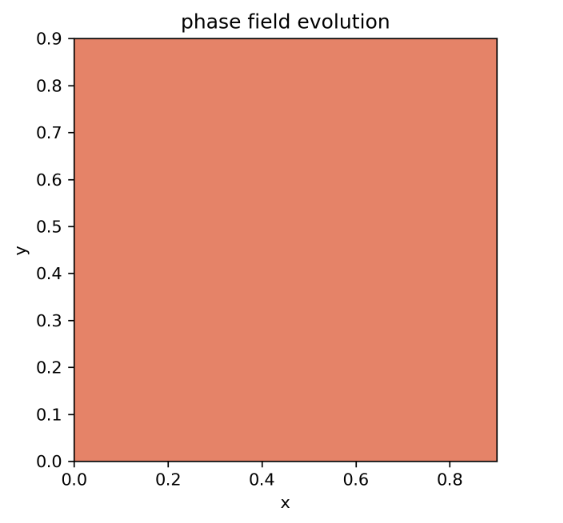}}    
\subfloat[t=0.50s]{\includegraphics[width=0.95\linewidth]{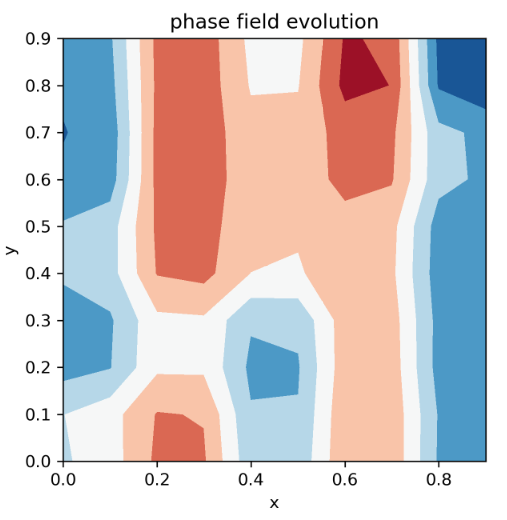}}
\subfloat[t=1.00s]{\includegraphics[width=1.\linewidth]{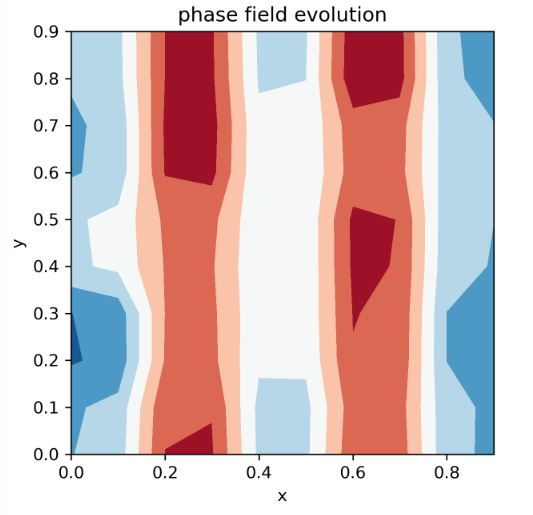}}
\subfloat[t=1.25s]{\includegraphics[width=1.05\linewidth]{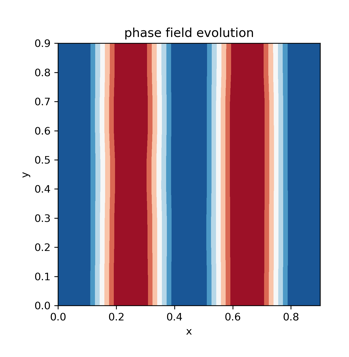}}
\end{multicols}
\begin{multicols}{4}
\subfloat[Initial]{\includegraphics[width=1.1\linewidth]{Figures/Material/Nominal/file0.png}}    
\subfloat[t=0.50s]{\includegraphics[width=\linewidth]{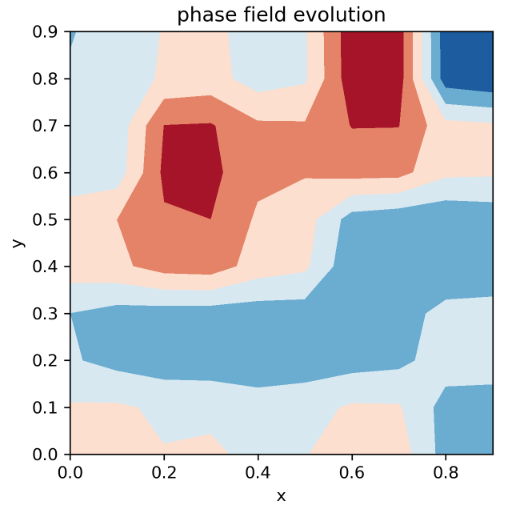}}
\subfloat[t=1.00s]{\includegraphics[width=0.98\linewidth]{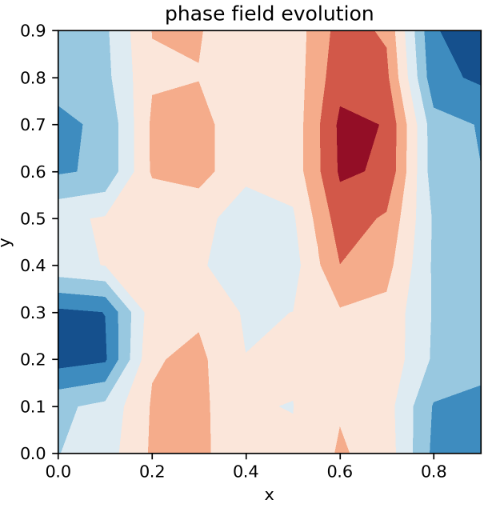}}
\subfloat[t=1.25s]{\includegraphics[width=\linewidth]{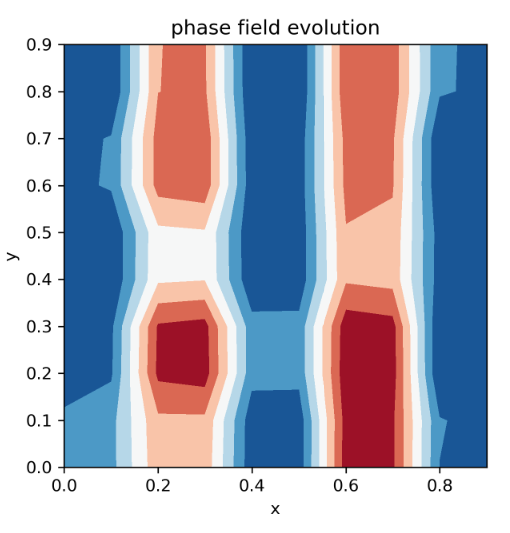}}\\
\end{multicols}
\subfloat[Closed-loop Performance]{\includegraphics[width=0.9\linewidth]{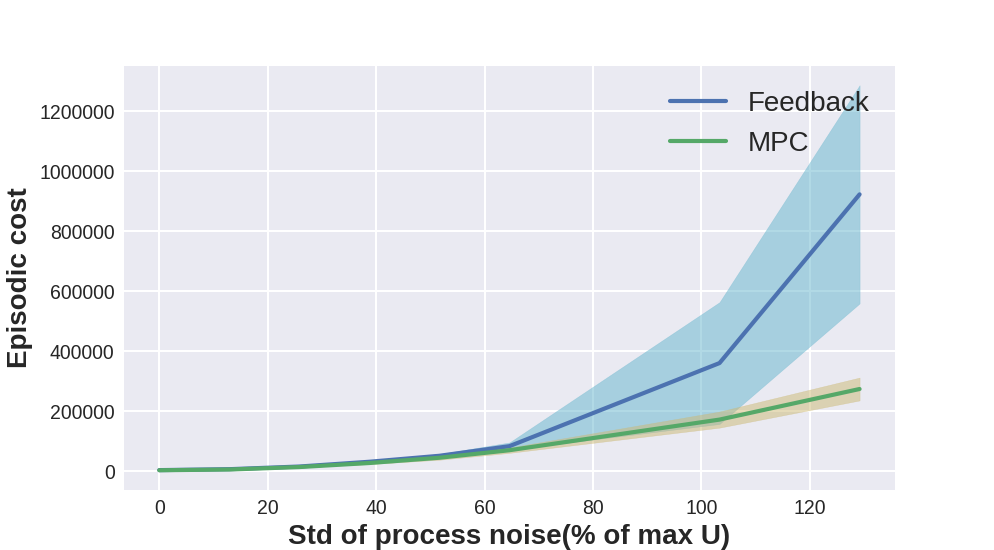}}
\caption{\small Top: Episodic cost vs.~training iteration number in D2C for the Material Microstructure. Middle: Closed-loop trajectories showing the temporal evolution of the spatial microstructure from the initial configuration on the left to the desired configuration on the extreme right. Figs. (b)-(e) No input noise, and (f)-(i) Gaussian input noise at std 50\% $U_{MAX}$. Bottom: Closed-loop performance comparison between D2C with LQR feedback and D2C with replanning.}
\label{material}
\end{figure}

\begin{figure*}[!htb]
\begin{multicols}{2}
      \hspace{50pt}\subfloat[Inverted Pendulum - varied process noise\\ in control ]{\includegraphics[width=0.8\linewidth]{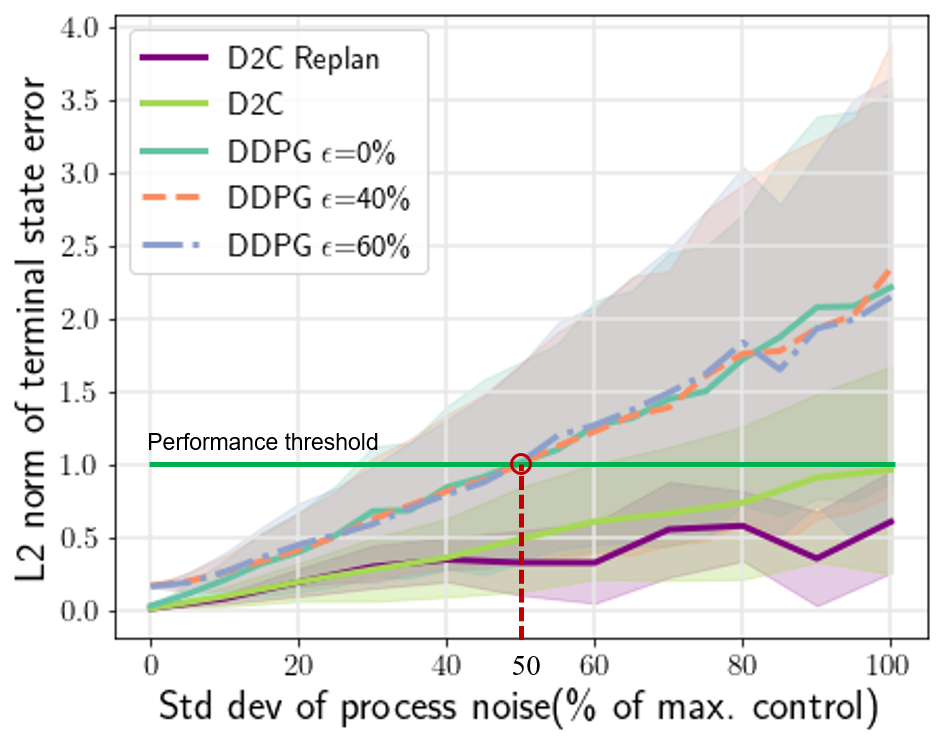}}\subfloat[Inverted Pendulum - varied process noise in state and control ]{\includegraphics[width=0.8\linewidth]{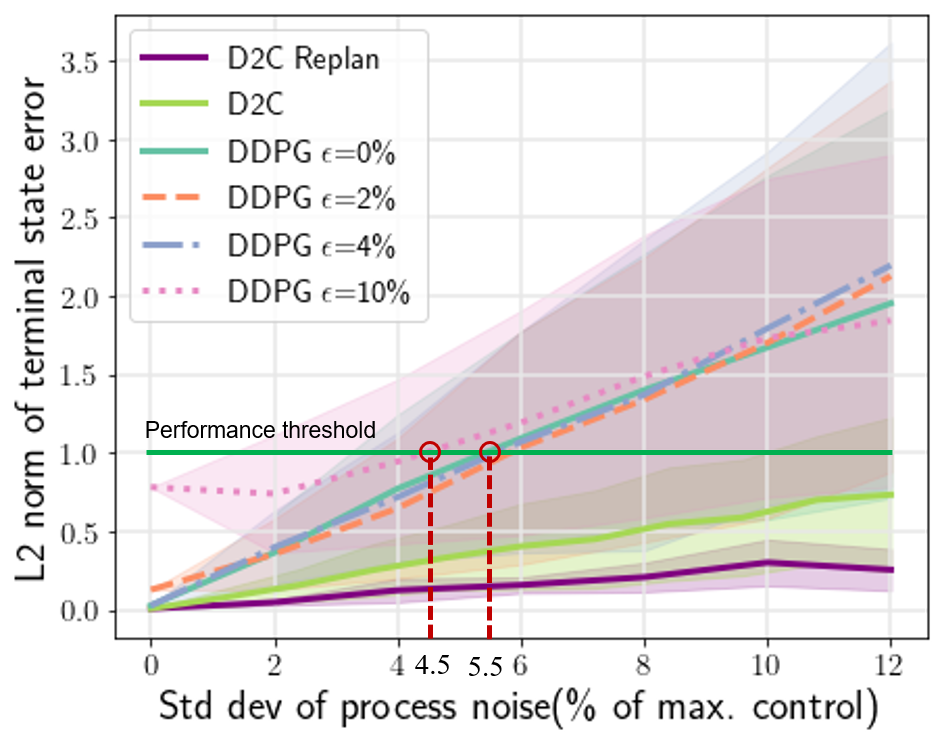}}
\end{multicols}

\begin{multicols}{2}
      \hspace{50pt}\subfloat[Cart-Pole - varied process noise in control ]{\includegraphics[width=0.8\linewidth]{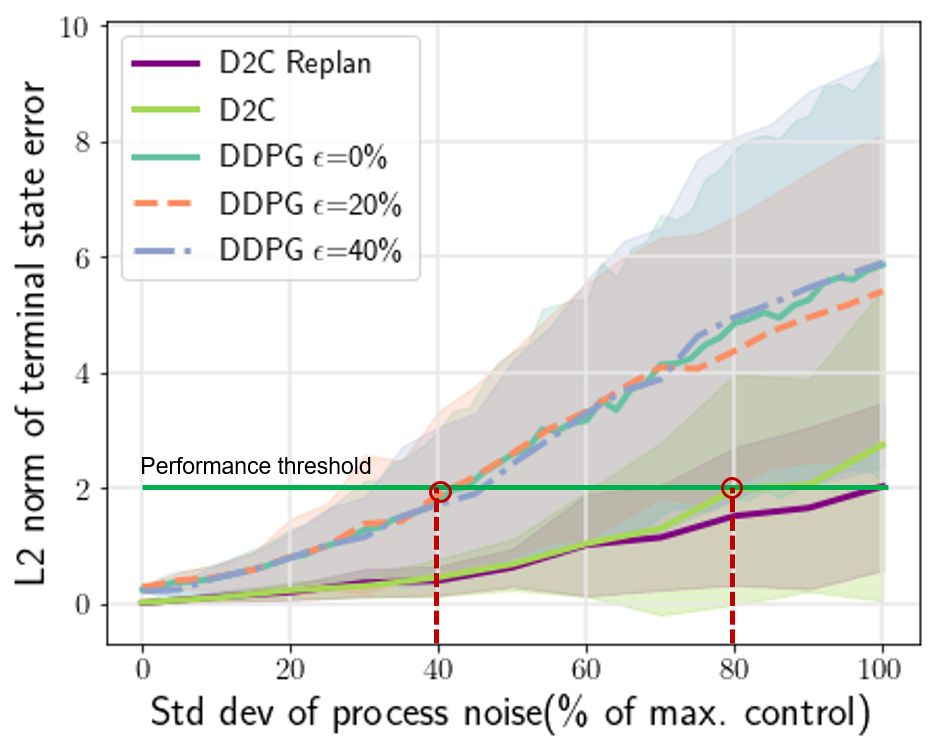}}
     \subfloat[Cart-Pole - varied process noise in state and control ]{\includegraphics[width=0.8\linewidth]{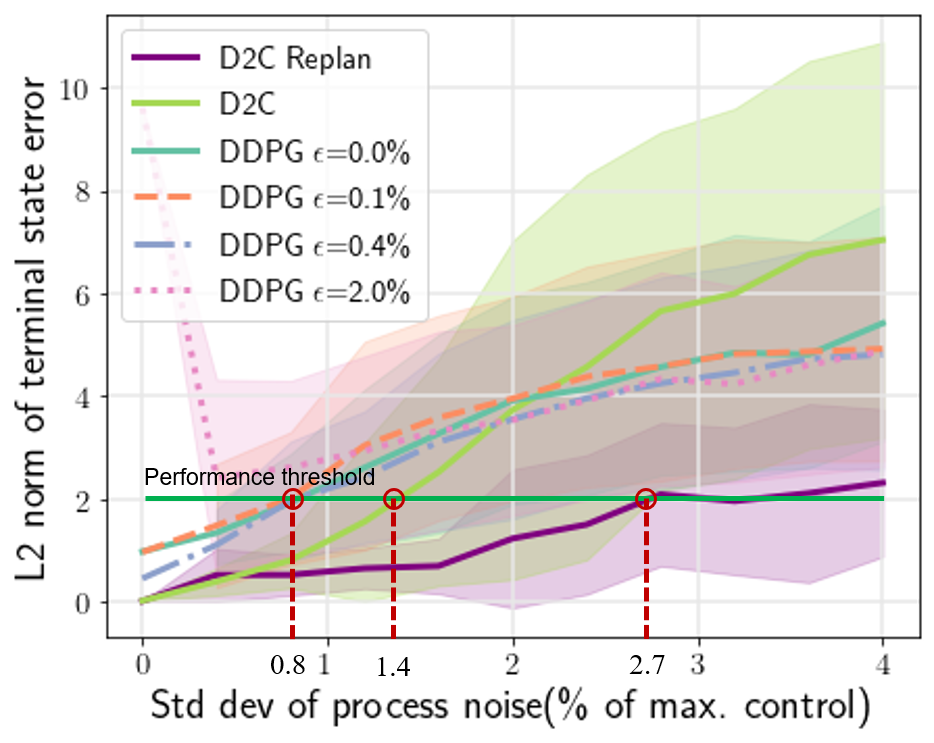}}
\end{multicols}

\caption{L2-norm of terminal state error tested under varying process noise}
\label{pnoise_plots}
\end{figure*}


\begin{figure*}[!htb]
\centering
\begin{multicols}{3}
    \subfloat{\includegraphics[width=1.\linewidth, height=5cm]{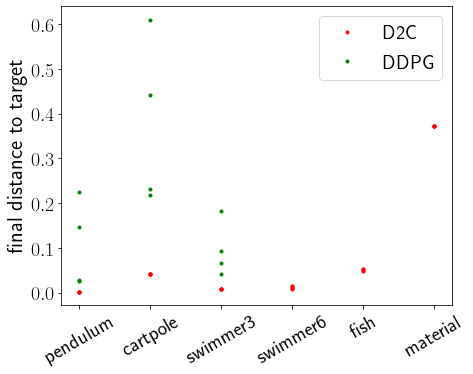}}
    \subfloat{\includegraphics[width=1.05\linewidth, height=5cm]{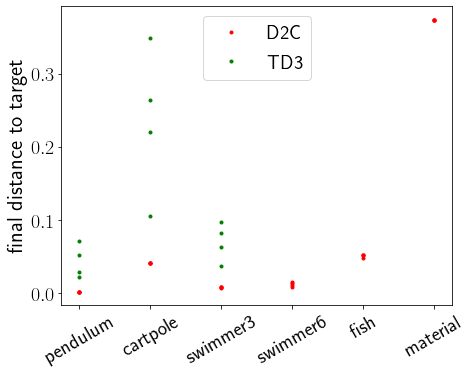}}
    \subfloat{\includegraphics[width=1.\linewidth, height=5cm]{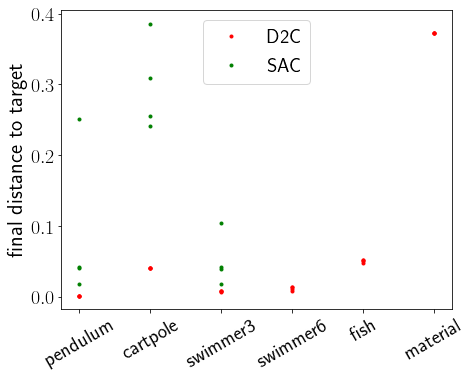}}
\end{multicols}
\caption{Training variance comparison D2C vs. RL methods.}
\label{varcompare}
\end{figure*}
\subsection{Training Efficiency}
We measure training efficiency by comparing the times taken for the episodic cost (or reward) to converge during training. Plots in Figure. 3 show the training process with DDPG and D2C on the systems considered. Table \ref{d2c_comparison_table2} delineates the times taken for training for all four methods respectively. The total time comparison in Table \ref{d2c_comparison_table2} shows that D2C learns the optimal policy orders of magnitude faster than the RL methods. The primary reason for this disparity is the feedback parametrization of the two methods: the RL deep neural nets are complex parametrizations that are difficult to search over, when compared to the highly compact open-loop + linear feedback parametrization of D2C, i.e. the number of parameters optimized during D2C training is the number of actuators times the number of timesteps while the RL parameter size equals the size of the neural networks, which is much larger. Due to the much larger network size, the computation done per rollout is much higher for the RL methods.
From Figure. \ref{material}(a), on the material microstructure problem (a 400-dimensional state and 100-dimensional control), we observe that D2C converges very quickly, even for a very high dimensional system ($d=$ 400), whereas DDPG fails to converge to the correct goal state.\\
We also note the benefit of ILQR here: due to its quadratic convergence properties, the convergence is very fast, when allied with the randomized LLS-CD procedure for Jacobian estimation.\\



\subsection{Closed-loop Performance} 
It might be expected that since the RL methods search over a nonlinear neural network parametrization, it provides a global feedback law while D2C, by design, only provides a local feedback, and thus, the performance of the RL methods might be better globally. To test this hypothesis, we apply noise to the system via the $\epsilon$ parameter, and find the average performance of all the methods at each noise level. This has the effect of perturbing the state from its nominal path, and thus, can be used to test the efficacy of the controllers far from a nominal path, i.e., their global behavior. However, it can be seen from Figure.~\ref{d2c_2_training_testing} bottom row that the performance of D2C is actually better than the RL methods at all noise levels, in the sense that the terminal error of the D2C controller is lower than that of the RL controller. This, in turn implies that albeit the RL methods are theoretically global in nature, in practice, it is reliable only locally, and moreover, its performance is inferior to the local D2C approach. We also report the effect of replanning on the D2C scheme, and it can be seen from these plots that the performance of the replanned controller is far better than both D2C and the RL methods, thereby regaining globally optimal behavior. We also note that the performance of D2C is similar in the high dimensional material microstructure control problem while DDPG fails to converge in this problem (Fig. \ref{material}).

\textit{Replanning with D2C}. Under large noise levels, the local feedback policy found by D2C may not give a good closed-loop performance, and thus we introduce a replanning procedure which re-solves the open-loop design from the current state of the system and wrap another local feedback policy along the new optimal trajectory. During the replanning, we take the current nominal policy as the initial guess. With this warm start, the time and iterations taken in each replanning step are less than solving the open-loop optimization with a ``zero" initial guess in D2C. Under $100\%~U_{max}$ noise, the fish needs 25 seconds and 13 iterations, the 6-link swimmer needs 90 seconds and 51 iterations, on average, for each replanning. As the cart-pole fails under high noise level, it is tested with $40\%~U_{max}$ noise and needs 12 iterations, and 0.5 seconds, on average. Thus, by replanning, the closed-loop performance can be improved with affordable training time increase. 
Finally, we note that the estimation of the feedback gain takes a very small fraction of the training time when compared to the open-loop, even though it is a much bigger parameter: this is a by-product of the decoupling result.

\subsection{Reliability of Training}
For any algorithm that has a training step, it is important that the training result be stable and reproducible, and thus reliable. However, reproducibility is a major challenge that the field of reinforcement learning (RL) is yet to overcome, a manifestation of the extremely high variance of RL training results. 
Thus, we test the training variance of D2C by conducting multiple training sessions with the same hyperparameters but different random seeds. The middle row of Figure.~\ref{d2c_2_training_testing} shows the mean and the standard deviation of the episodic cost data during 16 repeated D2C training runs each. For the cart-pole model, the results of all the training experiments are almost the same. Even for more complex models like the 6-link swimmer and the fish, the training is stable and the variance is small. 
Further investigation into the training results shows that given the set of hyperparameters, D2C always results in the same policy (with a very small variance) unlike the results of the RL methods which have high variance even after convergence, which was reported in \cite{islam2017reproducibility}. We show this in Fig.~\ref{varcompare}, where the final distance to target of the nominal trajectories (i.e., nominal control sequence of D2C and the RL methods) generated from 4 different instances of converged training of all the four methods with identical training hyper-parameters. It can be noted that the D2C results overlap with each other with very small variance while the RL methods' results have a much wider spread. Although TD3 and SAC outperformed DDPG, their training variance is still much higher than D2C. The high variance of the training results, makes it questionable whether the RL methods converge to an optimal solution or whether the seeming convergence is simply the result of the shrinking exploration noise as training progresses. On the other hand, D2C can guarantee the same solution from a converged training. Thus, the advantage of a local approach like D2C in training stability and reproducibility makes it far more reliable for solving data-based optimal control problems when compared to global approaches like DDPG, TD3 and SAC.

\begin{table*}[!htbp]
\hspace{27mm}
\begin{threeparttable}
\begin{center}
\caption{Comparison of the training outcomes of D2C with DDPG, TD3 and SAC.}
\centering
\setlength{\tabcolsep}{2mm}{
\begin{tabular}{|c|c|c|c|c|c|c|c|c|}
\hline
{\bf System}& \multicolumn{4}{|c|}{\bf Training time (in sec.)} & \multicolumn{4}{|c|}{\bf Training variance}\\\cline{2-9}
& {\bf D2C}&{\bf DDPG}&{\bf TD3}&{\bf SAC}& {\bf D2C}&{\bf DDPG}&{\bf TD3}&{\bf SAC}\\
\hline
{Inverted Pendulum} & 0.33 &2261.2&937.6&1462.6&$6.7\times10^{-5}$&0.08&0.02&0.09\\
\hline
{Cart pole}&0.55& 6306.7&4304.5&7407.4&0.0004&0.16&0.09&0.06\\
\hline
{3-link Swimmer}&186.2& 38833.6&48038.0&64035.0&0.0007&0.05&0.02&0.03\\
\hline
{6-link Swimmer} & 127.2& 88160.0&47508.4&57372.7&0.0023&*&*&*\\ 
\hline
{Fish} & 54.8& 124367.6&46238.7&95769.1&0.0016&*&*&*\\
\hline
\end{tabular}
}
\label{d2c_comparison_table2}
\end{center}
    \begin{tablenotes}
      \small
      \item [*] No data because the training time taken is too long.
    \end{tablenotes}
\end{threeparttable}
\end{table*}

\subsection{Learning on Stochastic Systems}\label{sto_sec}
A noteworthy facet of the D2C design is that it is agnostic to the uncertainty, encapsulated by $\epsilon$, and the near-optimality stems from the local optimality (identical nominal control and linear feedback gain) of the deterministic feedback law when applied to the stochastic system. One may then question the fact that the design is not for the true stochastic system, and thus, one may expect RL techniques to perform better since they are applicable to the stochastic system. However, in practice, most RL algorithms only consider the deterministic system, in the sense that the only noise in the training simulations is the exploration noise in the control, and not from a persistent process noise. We now show the effect of adding a persistent process noise with a small to moderate value of $\epsilon$ to the training of DDPG, in the control as well as the state.


We trained the DDPG policy on the pendulum and cart-pole examples. To simulate the stochastic environment, Gaussian i.i.d. random noise is added to all the input channels as process noise. The noise level $\epsilon$ is the noise standard deviation divided by the maximum control value of the open-loop optimal control sequence. Figure.~\ref{pnoise_plots} shows the closed-loop performance of DDPG policies trained and tested under different levels of process noise. The closed-loop performance is measured by the mean and variance of the L2-norm of terminal state error. In the first column, process noise is only added to input channels during training and testing while it is added to both state and input in the second column. The performance threshold is the L2-norm of terminal state error, chosen such that the system at that terminal state is close enough to the target state and it can be stabilized with an LQR controller designed around the target state. For the pendulum, it is chosen such that the angle and angular velocity are smaller than $\ang{40}$ and $\ang{40}/s$ respectively. For the cartpole, the angle and angular velocity of the bar need to be smaller than $\ang{40}$ and $\ang{40}/s$ respectively, the cart position and velocity need to be smaller than $0.8m$ and $1.5m/s$. The thresholds of testing process noise for the policies to keep a decent performance are marked in the plots. When the testing process noise is larger than the threshold, the tested policy can not get close enough to the target state. The problem is greatly exacerbated in the presence of state noise as seen from Figure. \ref{pnoise_plots}(b)(d) that results in bad performance in the different examples for even small levels of noise. Hence, although theoretically, RL algorithms such as DDPG can train on the stochastic system, in practice, the process noise level $\epsilon$ must be limited to a small value for training convergence and/or good policies. Further, the DDPG policy trained at a specific noise level does not perform better than a DDPG policy trained at a different noise level, when tested at that specific noise level. Also, in all the cases shown in these plots, the D2C policy outperforms all the DDPG policies within the performance threshold. Further, the D2C policy with replanning outperforms all the DDPG policies over all tested noise levels. Thus, this begs the question as to whether we should train on the stochastic system rather than appeal to the decoupling result that the deterministic policy is locally identical to the optimal stochastic policy, and thus train on the deterministic system. A theoretical exploration of this topic, in particular, the variance inherent in RL, is the subject of a companion manuscript \cite{arxivpaper3}.

\section{Conclusion}
In this paper, we have presented a data based control method, D2C, for synthesizing the feedback control law for analytically intractable nonlinear optimal control problems. The D2C policy is not global, i.e., it does not claim to be valid over the entire state space, however, seemingly global deep RL methods do not offer better performance as can be seen from our experiments. Further, owing to the fast and reliable open-loop solver, D2C could offer a real time solution even for high dimensional problems when allied with high performance computing. In such cases, one could replan whenever necessary, and this replanning procedure will make the D2C approach global in scope, as we have shown in this paper, albeit not in real time.\\
There might be a sentiment that the comparison with the RL methods is unfair due to the wide chasm in the training times, however, the primary point of our paper is to show theoretically, as well as empirically, that the local parametrization and search procedure advocated via D2C, is a highly efficient and reliable (almost zero variance) alternative that is still superior in terms of closed-loop performance when compared to typical global RL algorithms like DDPG, TD3 and SAC. Thus, for data-based optimal control problems that need efficient training, reliable near-optimal solutions, and robust closed-loop performance, such local RL techniques, coupled with replanning, should be the preferred method over typical global RL methods.\\
Finally, we would like to note that methods such as DDP/ iLQR are termed ``trajectory optimization" techniques and are thought to be distinct from RL algorithms. We have shown in this paper that these methods are indeed (local) RL, or data based control methods, when suitably modified as presented in this paper, and thus, should not be thought of as distinct from RL.



\ifCLASSOPTIONcaptionsoff
  \newpage
\fi
\bibliographystyle{./IEEEtran}
\bibliography{IEEEabrv,refs_D2C,MAP_refs1,MohammadRafi,naveed_references,material_references}

\onecolumn
\appendix
\pagenumbering{arabic}
\setcounter{page}{1}

In this supplementary document, we provide missing details from the empirical results in the paper as well as additional experiments that we did for this work. The outline is as follows: in the first section, we give details and additional results for the training tasks, while in the section following that, we give empirical results for the effect of stochasticity in the dynamics on training. The next section details the implementational of the D2C and DDPG algorithms used in this paper. We close with a section on the system dynamics Hessian estimation using Linear Least Square.

\subsection{Training Comparison: Additional Results}

\emph{Inverted Pendulum.} The training data and performance plots for the 3-link swimmer are shown in Fig.\ref{s3plots}(a), (b) and (c).

\emph{3-link Swimmer.} The training data and performance plots for the 3-link swimmer are shown in Fig.\ref{s3plots}(d), (e) and (f).

\begin{figure*}[!htpb]
\centering
\begin{multicols}{2}
    \subfloat[Inverted Pendulum]{\includegraphics[width=1.05\linewidth, height=6cm]{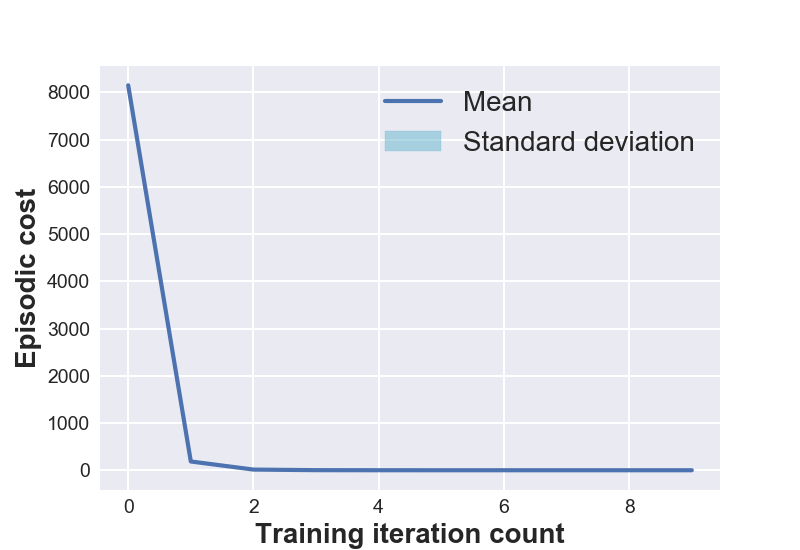}}
    \subfloat[3-link Swimmer]{\includegraphics[width=1.05\linewidth, height=6cm]{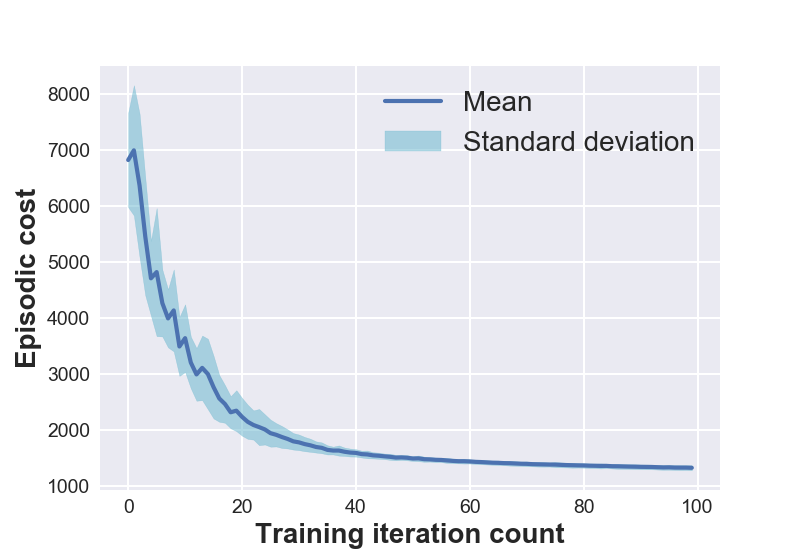}}
\end{multicols}

\begin{multicols}{2}
    \subfloat[Inverted Pendulum]{\includegraphics[width=\linewidth, height=6cm]{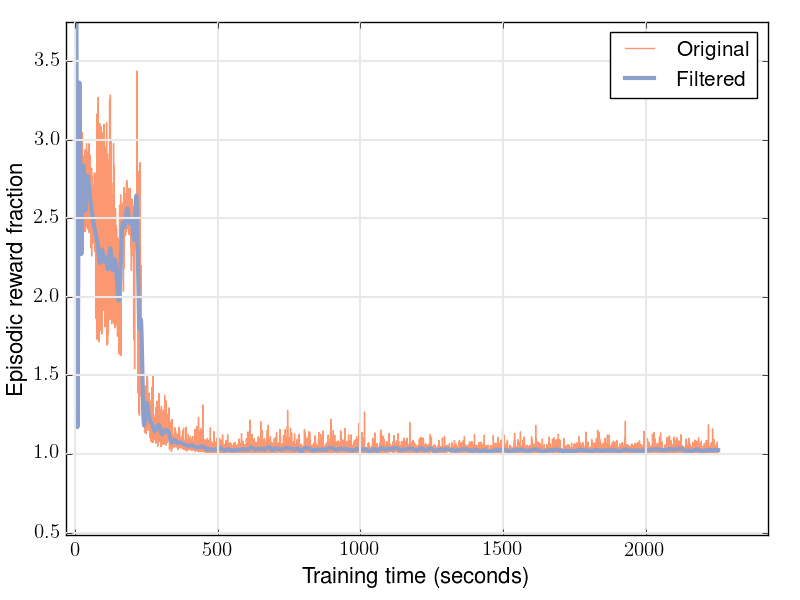}}
    \subfloat[3-link Swimmer]{\includegraphics[width=\linewidth, height=6cm]{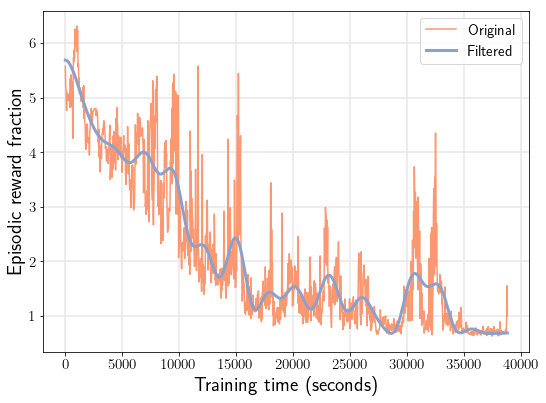}}
\end{multicols}

\begin{multicols}{2}
    \subfloat[Inverted Pendulum]{\includegraphics[width=1.05\linewidth, height=6.5cm]{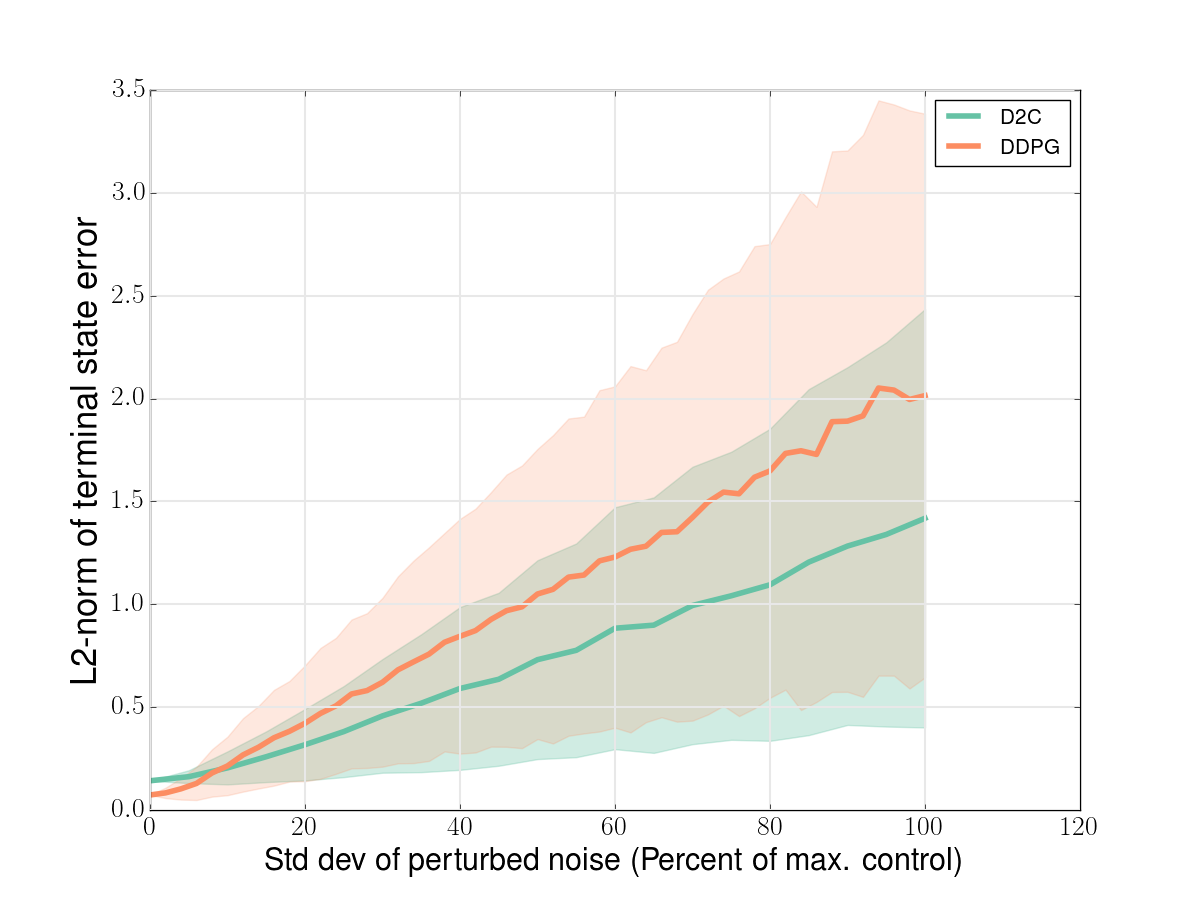}}
    \subfloat[3-link Swimmer]{\includegraphics[width=\linewidth, height=6cm]{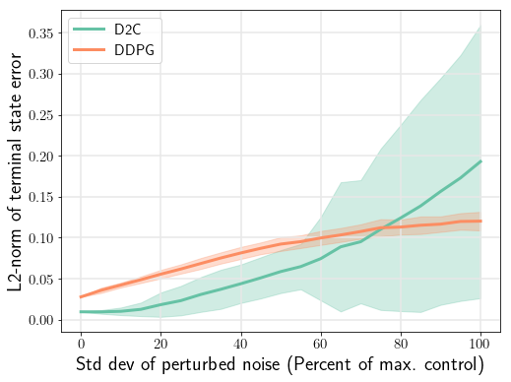}}
\end{multicols}
\caption{Top row: Convergence of Episodic cost in D2C. Middle row: Convergence of Episodic cost in DDPG. Bottom row: L2-norm terminal state error during testing in D2C vs DDPG. The solid line in the plots indicates the mean and the shade indicates the standard deviation of the corresponding metric.}
\label{s3plots}
\end{figure*}

\emph{Data Efficiency, Time Efficiency and Parameter Size.}
In Fig. \ref{d2c_training_rollout}, we give results of training D2C and DDPG with respect to the number of rollouts. This is in addition to the time plot given in Fig. \ref{d2c_2_training_testing}. Note that the time efficiency of D2C is far better than DDPG while the data efficiency of DDPG seems better (in the swimmers and fish), in that it needs fewer rollouts for convergence for the swimmers (albeit it does not converge to a successful policy for the fish, in the time allowed for training). In our opinion, the wide time difference is due to the disparity in the size of the feedback parametrization between the two methods.
Table \ref{parasize} summarizes the parameter size during the training of D2C and DDPG. The number of parameters optimized during D2C training is the number of actuators times the number of timesteps while the DDPG parameter size equals the size of the neural networks, which is much larger. Due to the much larger network size, the computation done per rollout is much higher for DDPG. \textit{We note here that these are the minimal sizes required by the deep nets for convergence and we cannot really make them smaller without loss of convergence}. 
This is not surprising as the D2C primarily derives its efficiency from its compact parametrization of the feedback law. \\
Finally, regarding the seeming sample efficiency of DDPG, it is true that DDPG converges to ``a solution" in fewer rollouts but that does not mean it converges to the optimal solution. Please see the paper: "On the Convergence of Reinforcement Learning", to see the sample complexity required for an ``accurate" solution, which turns out to be double factorial-exponential in the order of the approximation desired. 
\begin{table}[ht]
\caption{Parameter size comparison between D2C and DDPG}
\label{parasize}
\centering
\vspace{0.1in}
\begin{threeparttable}
\setlength{\tabcolsep}{0.6mm}{
\begin{tabular}{|c|c|c|c|c|}
\hline
System& No. of  & No. of & No. of  &No. of \\
&steps&actuators&parameters&parameters\\
&&&optimized&optimized\\
&&& in D2C& in DDPG\\
\hline
Inverted&30  &1&30&244002\\
Pendulum& & &&\\
\hline
Cart pole& 30 &1& 30&245602\\
\hline
3-link & 950&2 & 1900&251103\\
Swimmer& & &&\\
\hline
6-link &900&5 &4500&258006\\
Swimmer &&&& \\ 
\hline
Fish&1200 &6 & 7200&266806\\
\hline
Material &100&800 &80000&601001\\
microstructure &&&& \\ 
\hline
\end{tabular}
}
\vspace{0.1in}
\end{threeparttable}
\end{table}

\begin{figure*}[!htb]
\begin{multicols}{3}
      \subfloat[Inverted Pendulum D2C]{\includegraphics[width=1.05\linewidth]{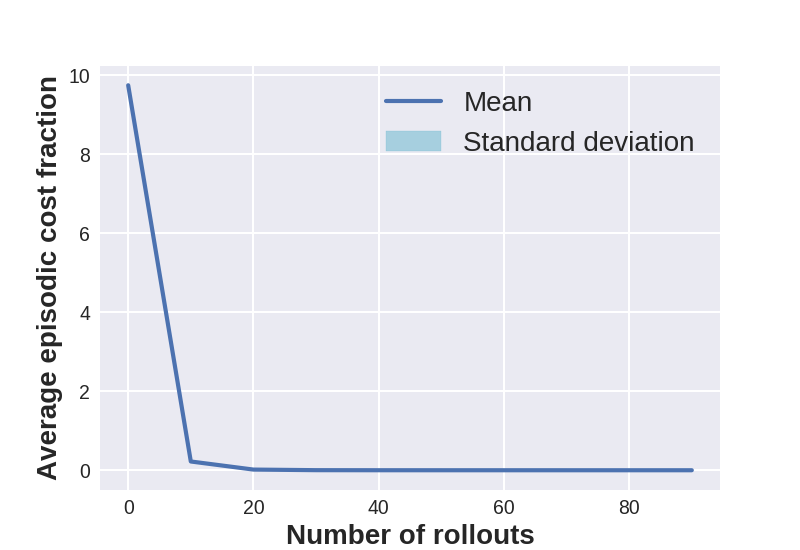}}  
      \subfloat[Cart-Pole D2C ]{\includegraphics[width=1.05\linewidth]{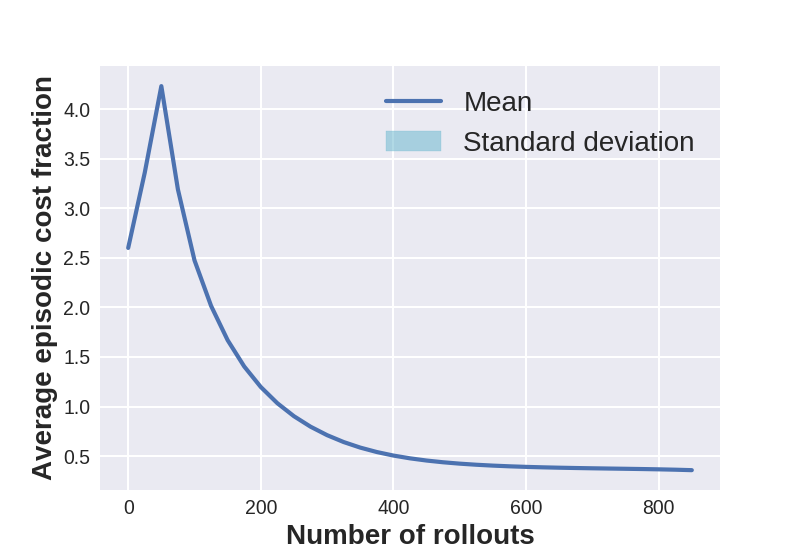}}
      \subfloat[3-link Swimmer D2C ]{\includegraphics[width=1.05\linewidth]{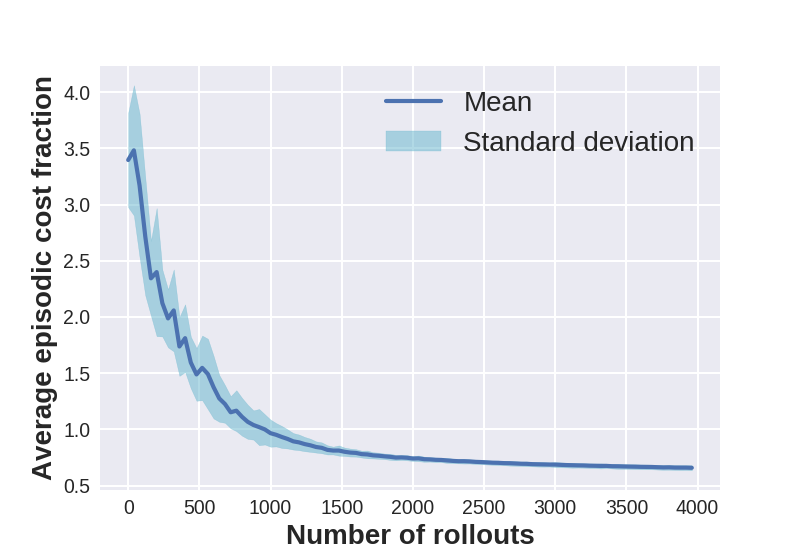}}
\end{multicols}
\begin{multicols}{3}
      \subfloat[Inverted Pendulum DDPG]{\includegraphics[width=1.02\linewidth]{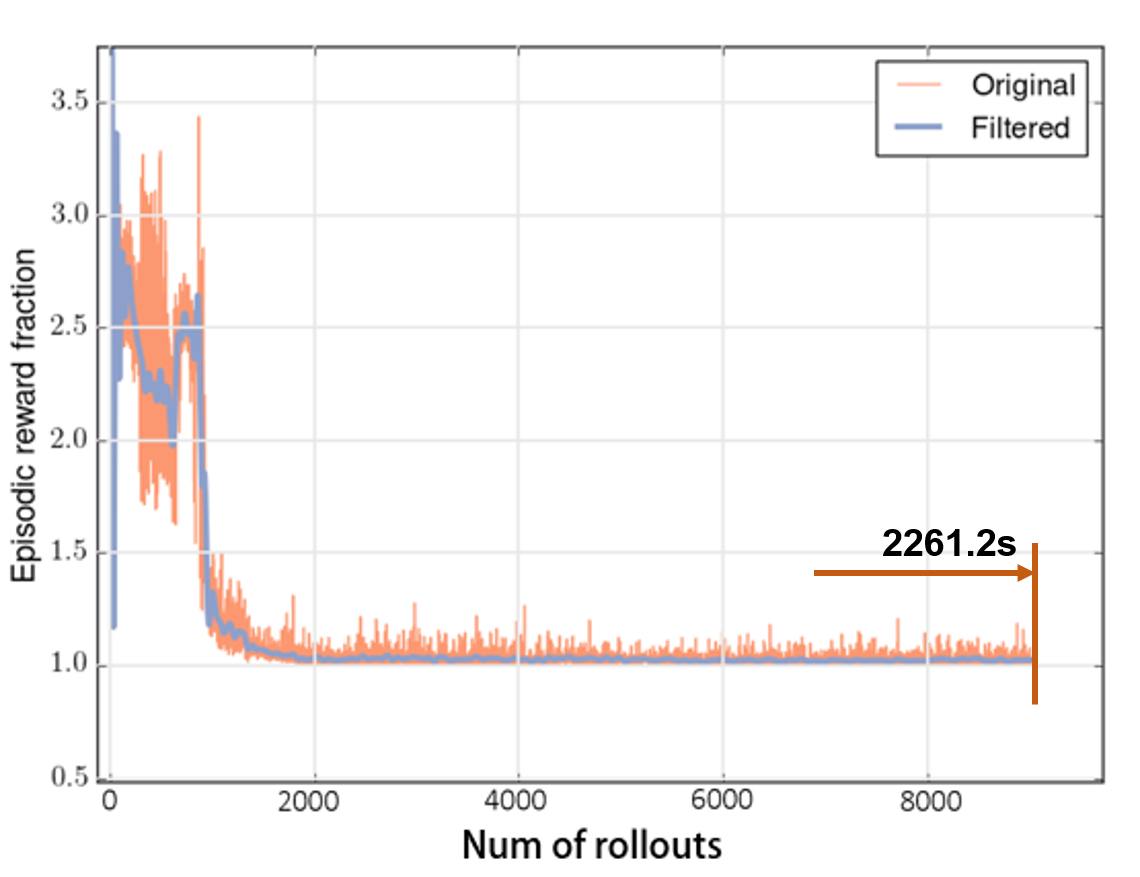}}  
      \subfloat[Cart-Pole DDPG ]{\includegraphics[width=1.02\linewidth]{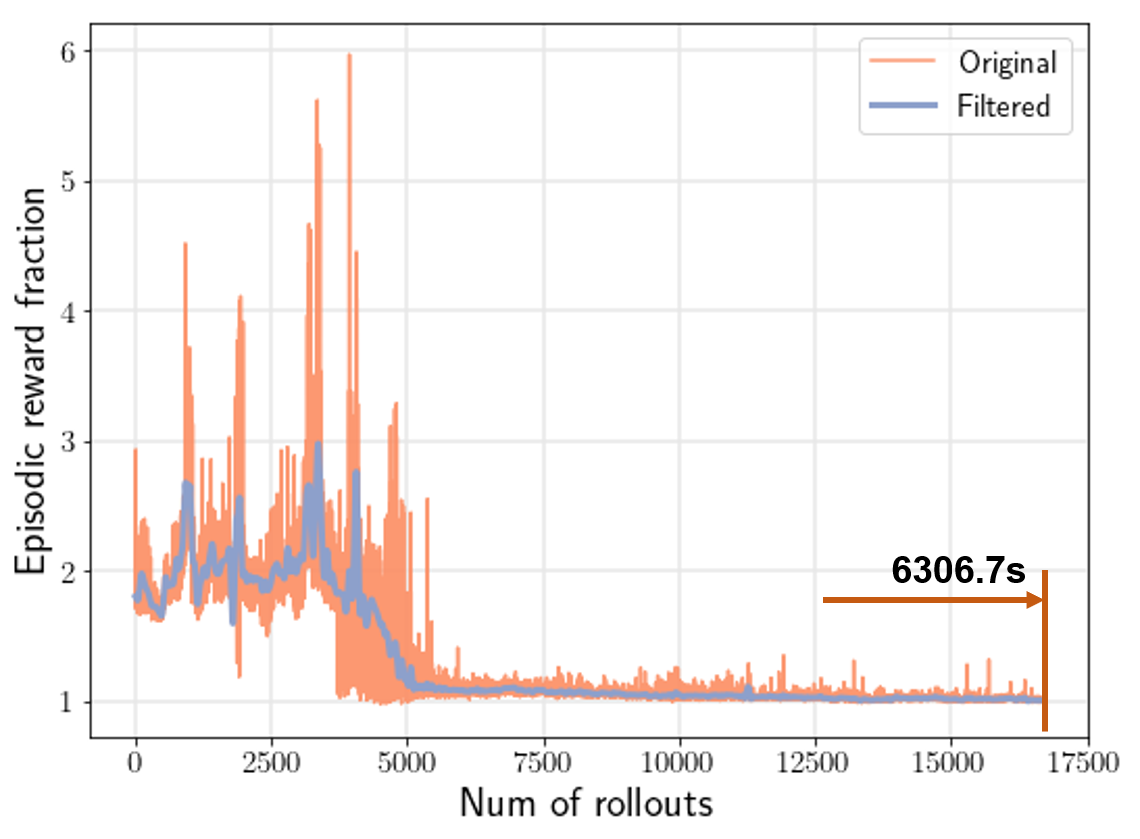}}
      \subfloat[3-link Swimmer DDPG]{\includegraphics[width=1.02\linewidth]{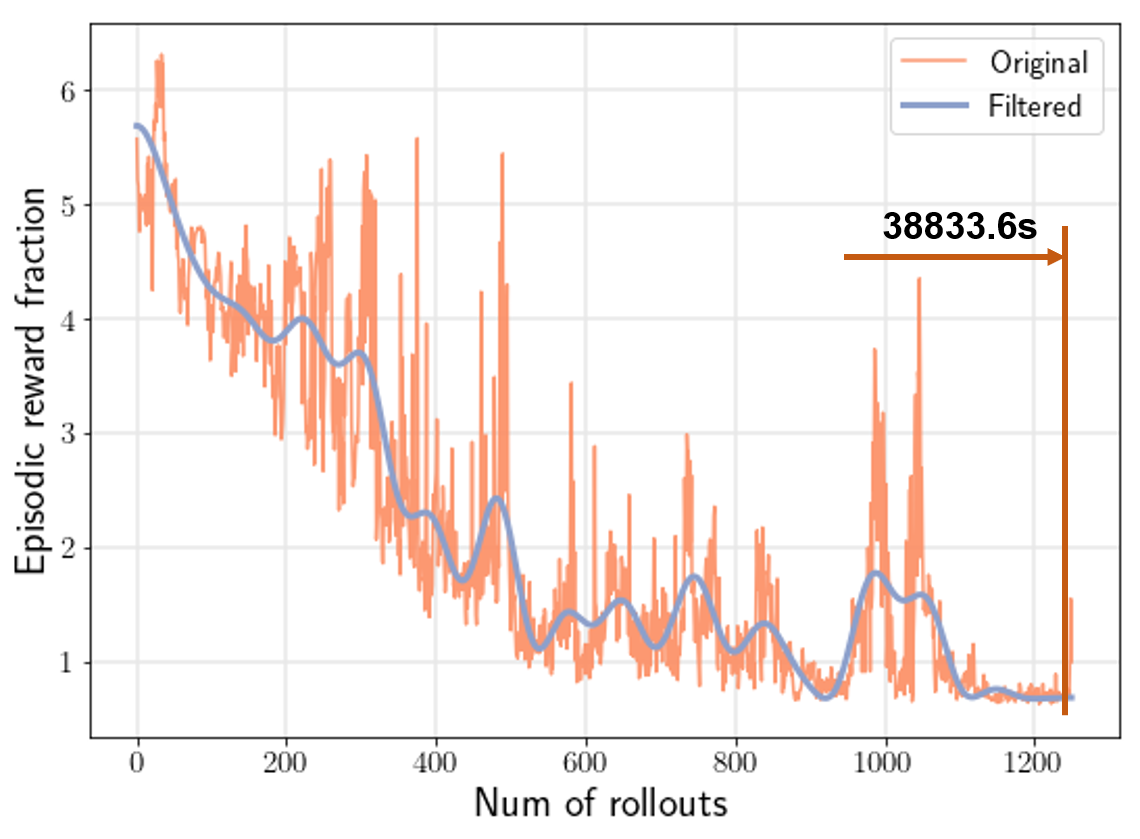}}
\end{multicols}

\begin{multicols}{3}
      \subfloat[6-link Swimmer D2C ]{\includegraphics[width=1.05\linewidth]{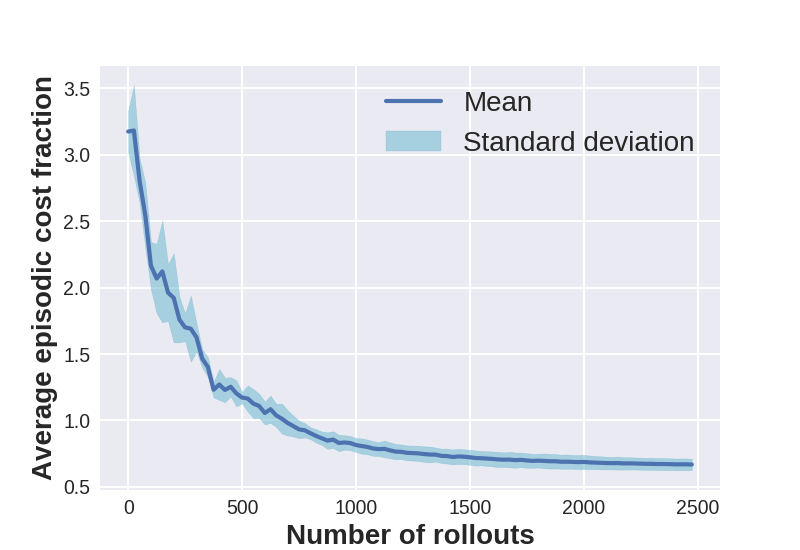}}
      \subfloat[Fish D2C]{\includegraphics[width=1.05\linewidth]{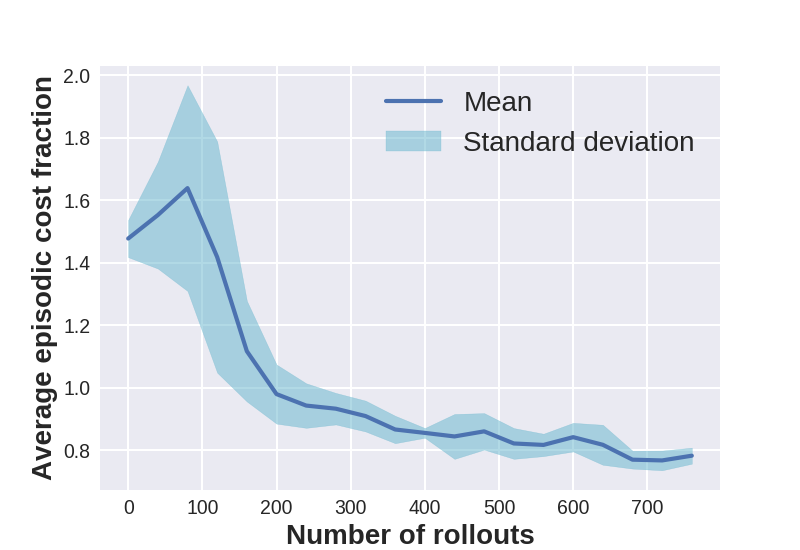}}
      \subfloat[Microstructure D2C]{\includegraphics[width=1.05\linewidth]{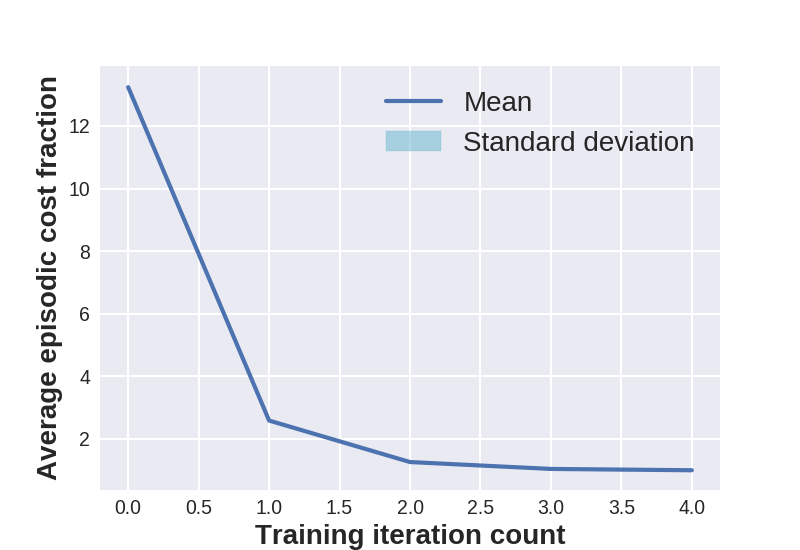}}
\end{multicols}
\begin{multicols}{3}
      \subfloat[6-link Swimmer DDPG ]{\includegraphics[width=1.02\linewidth]{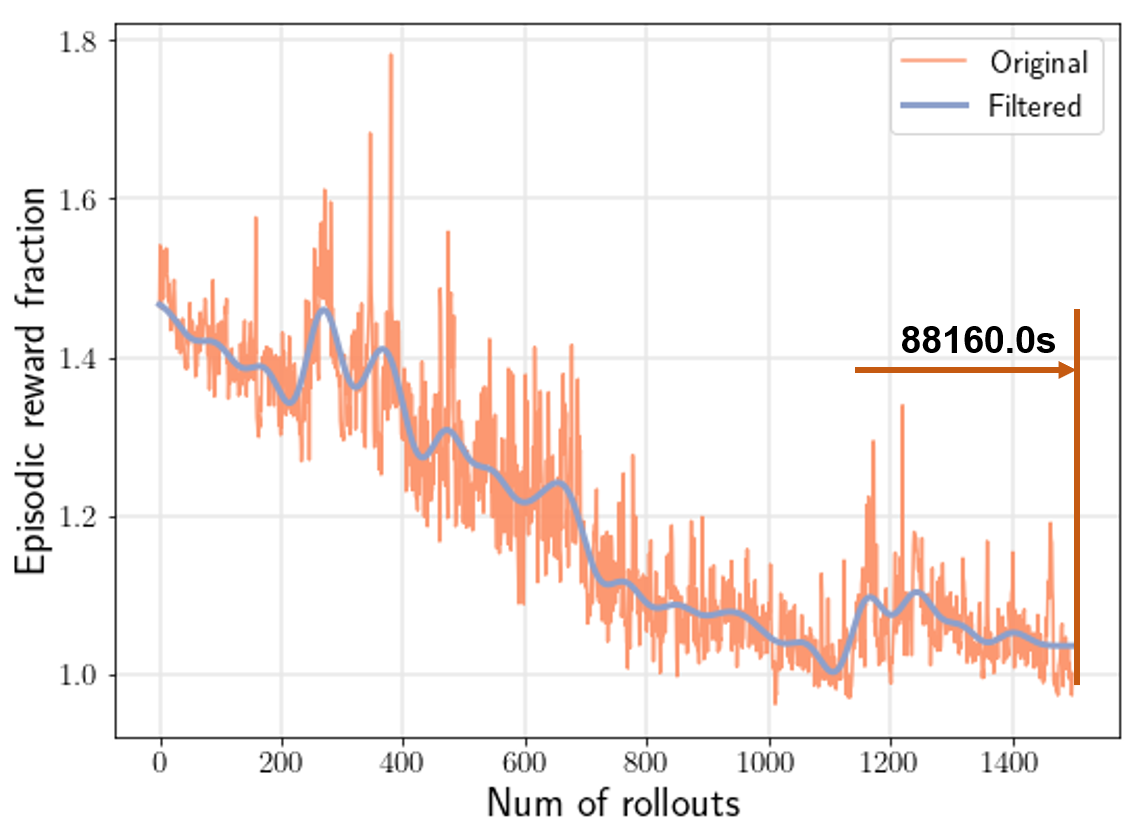}}
      \subfloat[Fish DDPG ]{\includegraphics[width=1.02\linewidth]{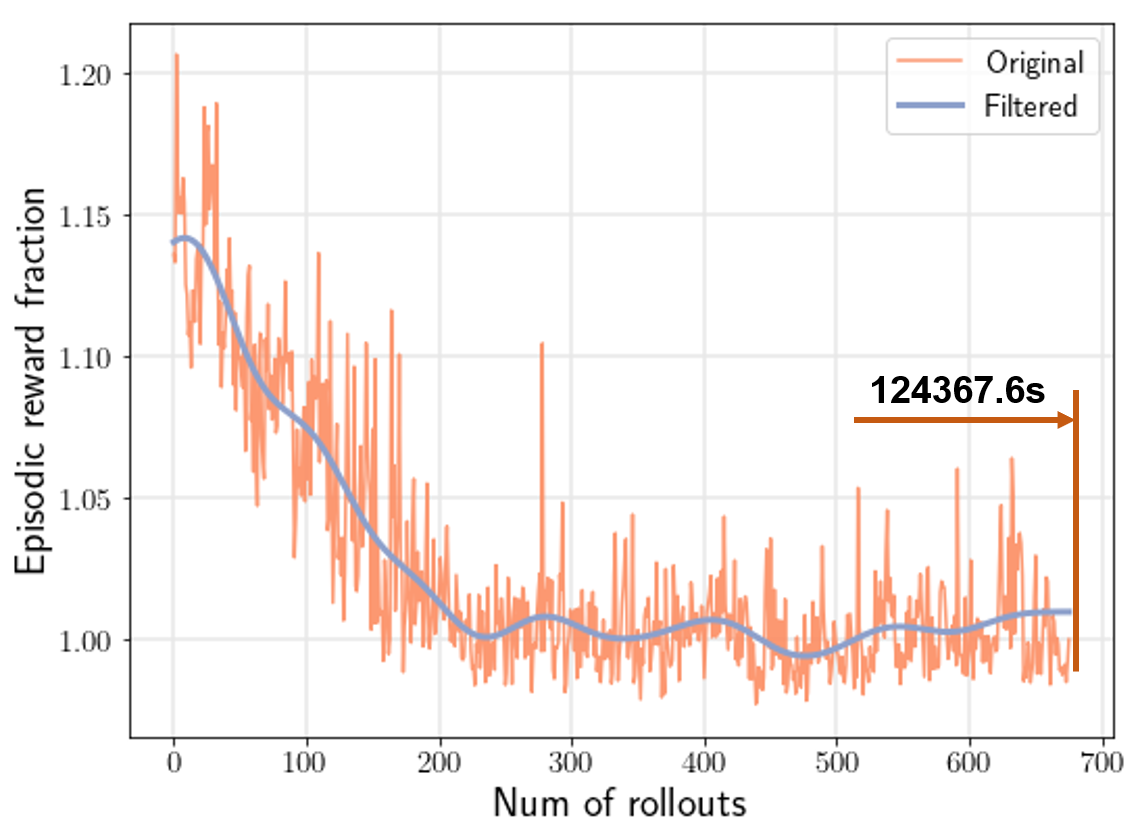}}
      \subfloat[Microstructure DDPG ]{\includegraphics[width=1.02\linewidth]{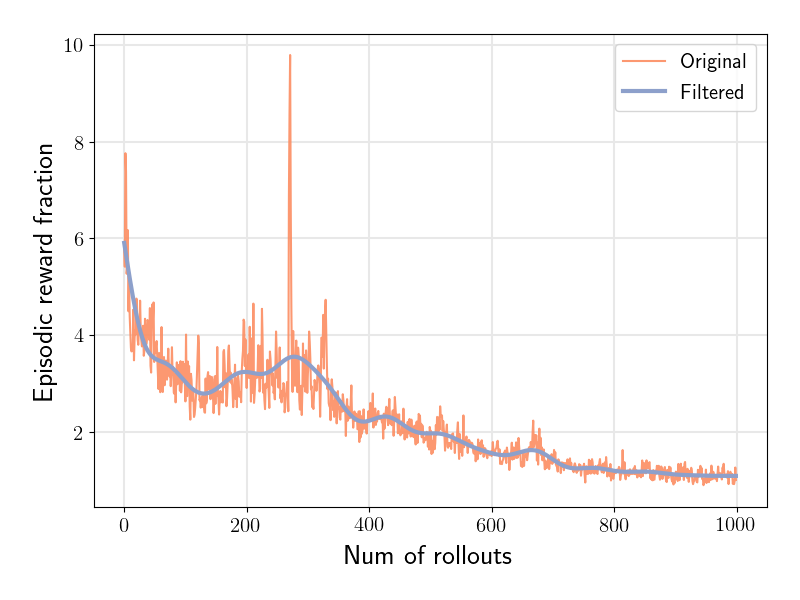}}
\end{multicols}
\caption{Episodic reward/cost fraction vs number of rollouts taken during training}
\label{d2c_training_rollout}
\end{figure*}

\begin{table*}[!htp]
\caption{D2C parameters}
\label{d2cparam}
\centering
\vspace{0.1in}
\begin{threeparttable}
\begin{tabular}{|c|c|c|c|c|c|}
\hline
System& Std of & LLS iteration & \multicolumn{3}{|c|}{Cost parameters\tnote{*}}\\\cline{4-6}
& LLS noise &number&&&\\
&&&$Q$&$Q_N$&$R$\\
\hline
Inverted&$10^{-6}$  &50&(1, 0.1)&100&1\\
Pendulum& & &&&\\
\hline
Cart pole& $10^{-5}$ &60& 0.1&5000&0.5\\
\hline
3-link & $10^{-5}$&30 & (8, 8, &(6000, 6000, &0.05\\
Swimmer& & &0, ...0) &0, ...0) &\\
\hline
6-link &$10^{-5}$&30 &(8, 8, &(1000, 1000, &0.001\\
Swimmer &&&0, ...0) &0, ...0) & \\ 
\hline
Fish&$10^{-5}$ &40 & (20, 20, 20, &(20000, 20000, 20000, &0.06\\
 &&&1, 0, ...0) &3000, 0, ...0) & \\
\hline
Material Microstructure&0.1 &- & 9&9000&0.1\\
\hline
\end{tabular}
\vspace{0.1in}
\begin{tablenotes}
        \footnotesize
        \item[*] $Q$ is the incremental cost matrix, $Q_N$ is the terminal cost matrix and R is the control cost matrix, all of which are diagonal matrices. If the diagonal elements have the same value, only one of them is presented in the table, otherwise all diagonal values are presented.
\end{tablenotes}
\end{threeparttable}
\end{table*}

\subsection{D2C and DDPG Implementation Details}
\label{sec:ddpg}
The parameters in the D2C implementation are summarized in Table \ref{d2cparam}. The iLQR algorithm searches over $\alpha$ by $\alpha = 0.99 \times \alpha$. The process noise variance $W = I$ and measurement noise variance $V = 0.01 \times I$ for all cases.

Deep Deterministic Policy Gradient (DDPG) is a policy-gradient based off-policy reinforcement learning algorithm that operates in continuous state and action spaces. It relies on two function approximation networks one each for the actor and the critic. The critic network estimates the $Q(s, a)$ value given the state and the action taken, while the actor network engenders a policy given the current state. Neural networks are employed to represent the networks. 

The off-policy characteristic of the algorithm employs a separate behavioral policy by introducing additive noise to the policy output obtained from the actor network. The critic network minimizes loss based on the temporal-difference (TD) error and the actor network uses the deterministic policy gradient theorem to update its policy gradient as shown below:

Critic update by minimizing the loss: 
\begin{equation*}
    L = \frac{1}{N} \Sigma_{i}(y_i - Q(s_i, a_i| \theta^{Q}))^2
\end{equation*}

Actor policy gradient update:
\begin{equation*}
    \nabla_{\theta^{\mu}} \approx \frac{1}{N} \Sigma_i \nabla_a Q(s,a|\theta^{Q})|_{s=s_i,a=\mu(s_i)} \nabla_{\theta^{\mu}} \mu(s|\theta^{\mu})|_{s_i}
\end{equation*}

The actor and the critic networks consist of two hidden layers with the first layer containing 400 ('{\it relu}' activated) units followed by the second layer containing 300 ('{\it relu}' activated) units. The output layer of the actor network has the number of ('tanh' activated) units equal to that of the number of actions in the action space.  

Target networks one each for the actor and the critic are employed for a gradual update of network parameters, thereby reducing the oscillations and better training stability. The target networks are updated at $\tau = 0.001$. Experience replay is another technique that improves the stability of training by training the network with a batch of randomized data samples from its experience. We have used a batch size of 32 for the inverted pendulum and the cart pole examples, whereas it is 64 for the rest. Finally, the networks are compiled using Adams' optimizer with a learning rate of 0.001. 

To account for state-space exploration, the behavioral policy consists of an off-policy term arising from a random process. We obtain discrete samples from the Ornstein-Uhlenbeck (OU) process to generate noise as followed in the original DDPG method. The OU process has mean-reverting property and produces temporally correlated noise samples as follows:
\begin{equation*}
    dx_t = \Theta (\mu - x_t)dt + \sigma dW
\end{equation*}
where $\Theta$ indicates how fast the process reverts to mean, $\mu$ is the equilibrium or the mean value and $\sigma$ corresponds to the degree of volatility of the process. $\Theta$ is set to 0.15, $\mu$ to 0 and $\sigma$ is annealed from 0.35 to 0.05 over the training process. 

\subsection{Estimation of Hessians: Linear Least Squares by Central Difference (LLS-CD)}
\label{hession_estimation}
Using the same Taylor's expansions as described in Section \ref{sys_id_solve}, we obtain the following central difference equation:
$
    F(\bar{x}_t + \delta x_t, \bar{u}_t + \delta u_t) + F(\bar{x}_t - \delta x_t, \bar{u}_t - \delta  u_t) = 2 F(\bar{x}_t, \bar{u}_t) + \begin{bmatrix} \delta {x_t}' & \delta {u_t}' \end{bmatrix} \begin{bmatrix} F_{x_tx_t} & F_{x_tu_t} \\ F_{u_tx_t} & F_{u_tu_t} \end{bmatrix} \begin{bmatrix}  \delta {x_t} \\ \delta {u_t} \end{bmatrix} + O(\| \delta {x_t}\|^4 + \| \delta {u_t}\|^4),
$
where $F_{x_tx_t} = \left.\dfrac{\partial^2 F}{\partial x^2}\right|_{x_t}$, similar for $F_{u_tx_t}$ and $F_{u_tu_t}$. Denote $z_t = F(\bar{x}_t + \delta x_t, \bar{u}_t + \delta u_t) + F(\bar{x}_t - \delta x_t, \bar{u}_t - \delta  u_t) - 2 F(\bar{x}_t, \bar{u}_t)$. The Hessian is a $(n_s + n_u)$ by $n_s$ by $(n_s + n_u)$ tensor, where $n_s$ is the number of states and $n_u$ is the number of actuators. Let's seperate the tensor into 2D matrices w.r.t. the second dimension and neglect time $t$ for simplicity of notations:

\begin{align}
z_i &= \begin{bmatrix} \delta {x}' & \delta {u}' \end{bmatrix} \begin{bmatrix} F_{xx}^{(i)}  & F_{xu}^{(i)}  \\ F_{ux}^{(i)}  & F_{uu}^{(i)}  \end{bmatrix} \begin{bmatrix}  \delta {x} \\ \delta {u} \end{bmatrix} \nonumber \\  \nonumber
&= \sum_{j=1}^{n_s} \sum_{k=1}^{n_s} \dfrac{\partial^2 F_i}{\partial x_j \partial x_k} \delta x_j \delta x_k + 2 \sum_{j=1}^{n_u} \sum_{k=1}^{n_s} \dfrac{\partial^2 F_i}{\partial u_j \partial x_k} \delta u_j \delta x_k + \sum_{j=1}^{n_u} \sum_{k=1}^{n_u} \dfrac{\partial^2 F_i}{\partial u_j \partial u_k} \delta u_j \delta u_k \\ \nonumber
&= \underbrace{\begin{bmatrix} \delta {x_1}^2 & \delta {x_1} \delta {x_2} & \cdots & \delta {x_1} \delta {u_{n_u}} & \delta {x_2}^2 & \delta {x_2} \delta {x_3} & \cdots& \delta {u^2_{n_u}}\end{bmatrix}}_{\delta {M}}  \underbrace{\begin{bmatrix} \dfrac{\partial^2 F_i}{\partial x_1^2} \\ \\ 2 \dfrac{\partial^2 F_i}{\partial x_1 \partial x_2} \\ \vdots \\ 2 \dfrac{\partial^2 F_i}{\partial x_1 \partial u_{n_u}} \\ \\ \dfrac{\partial^2 F_i}{\partial x_2^2} \\ \\ 2 \dfrac{\partial^2 F_i}{\partial x_2 \partial x_3} \\ \vdots \\ \dfrac{\partial^2 F_i}{\partial {u^2_{n_u}}}\end{bmatrix}}_{H_i},\\
\end{align}

where $F_{xx}^{(i)} = \left.\dfrac{\partial^2 F_i}{\partial x^2}\right|_{x_t}$, $z_i$ is the $i^{th}$ element of vector $z_t$ and $F_i$ is the $i^{th}$ element of the dynamics vector $F(x_t, u_t)$. Multiplying on both sides by $\delta M'$ and apply standard Least Square method:
$H_i = (\delta M^T \delta M)^{-1} \delta M^T z_i$. Then repeat for $i=1,2,...,n_s$ to get the estimation for the Hessian tensor.

\end{document}